\numberwithin{equation}{section}
\theoremstyle{plain}
\newtheorem{thm}{Theorem}[section]
\theoremstyle{remark}
\newtheorem{prop}{Proposition}[section]
\newtheorem{exmpl}{Example}[section]
\newcommand{\thetav}{\boldsymbol{\theta}}
\newcommand{\Thetav}{\boldsymbol{\Theta}}
\newcommand{\sigmav}{\boldsymbol{\sigma}}
\newcommand{\Sigmav}{\boldsymbol{\Sigma}}
\newcommand{\Av}{\mathbf{A}}
\newcommand{\Xv}{\mathbf{X}}
\newcommand{\Mv}{\mathbf{M}}
\newcommand{\Hv}{\mathbf{H}}
\newcommand{\bv}{\mathbf{c}}
\newcommand{\xv}{\mathbf{x}}
\newcommand{\tv}{\mathbf{t}}
\newcommand{\tXv}{\tilde{\Xv}}
\newcommand{\AIF}{\text{AIF}}
\newcommand{\di}{\text{d}}
\newcommand{\psiv}{\boldsymbol{\psi}}
\newcommand{\no}{\nonumber}
\begin{document}

\begin{frontmatter}
	\title{On the Adversarial Robustness of Multivariate Robust Estimation}
	\runtitle{On the Adversarial Robustness of Multivariate Robust Estimation}
	
	\begin{aug}
		\author{\fnms{Erhan} \snm{Bayraktar}\thanksref{a,e1}\ead[label=e1,mark]{erhan@umich.edu}}
		\and
		\author{\fnms{Lifeng} \snm{Lai}\thanksref{b,e2}\ead[label=e2,mark]{lflai@ucdavis.edu}}
		
		\address[a]{Department of Mathematics, University of Michigan, Ann Arbor, MI 48104.
			\printead{e1}}
		
		\address[b]{Department of Electrical and Computer Engineering, University of California, Davis, CA 95616.
			\printead{e2},
		}
		
		\runauthor{E. Bayraktar and L. Lai}
		
		\affiliation{University of Michigan and University of California, Davis}
		
	\end{aug}
	
	\begin{abstract}
		In this paper, we investigate the adversarial robustness of multivariate $M$-Estimators. In the considered model, after observing the whole dataset, an adversary can modify all data points with the goal of maximizing inference errors. We use adversarial influence function (AIF) to measure the asymptotic rate at which the adversary can change the inference result. We first characterize the adversary's optimal modification strategy and its corresponding AIF. From the defender's perspective, we would like to design an estimator that has a small AIF. For the case of joint location and scale estimation problem, we characterize the optimal $M$-estimator that has the smallest AIF. We further identify a tradeoff between robustness against adversarial modifications and robustness against outliers, and derive the optimal $M$-estimator that achieves the best tradeoff.
	\end{abstract}
	
	\begin{keyword}
		\kwd{Robustness}
		\kwd{adversarial attack}
		\kwd{$M$-estimator}
		\kwd{adversarial influence function}
	\end{keyword}
	
\end{frontmatter}

\section{Ordinary text}

Most of the existing work on robust statistical inference mainly address distributional robustness issues such as outliers or model uncertainties~\citep{Huber:AMS:64,Huber:Book:09,Hampel:Book:86}. As machine learning and statistical inference algorithms are being increasingly used in safety critical applications and security related applications~\citep{huval2015empirical,mlcyber,litjens2016deep,Nelson:2008:EML,Soule:2005:CFS,Stamp:Book:18,Suthaharan:2014:BDC:2627534.2627557,7995936,8460874}, there is a growing interest in investigating the robustness of statistical inference algorithms in adversarial environments. In these adversarial environments, we are facing more severe situations than those addressed in the classic robust statistical inference problems. One such scenario is where an adversary can observe the whole dataset and then devise its attack vector to modify all entries in the data point hoping to cause the maximum inference error or to control the inference results. For example, the adversarial example phenomenon in the deep neural network~\citep{Pimentel:ISIT:17,szegedy2013intriguing,easyfooldnn,Goodfellow:ICLR:15,Carlini:ISSP:17} where an adversary can observe the whole picture and then carefully modify the pixels in the picture with the goal of fooling the classifier. As another example, it was shown in~\citep{Jagielski:ISSP:18} that an adversary can modify the training data so that the model produced by the linear regression is controlled by the adversary. The existence of such powerful adversaries calls for new models and methodologies for adversarially robust inference.

In a canonical statistical inference problem, one infers parameters of interest from given data points $\{\mathbf{x}_1,\cdots,\mathbf{x}_N\}$. The classic robust inference mainly deals with \emph{distributional robustness}, i.e., the shape of the true underlying distribution deviates from the assumed model~\citep{Huber:AMS:64,Huber:Book:09,Hampel:Book:86}. More specifically, let $F_{\thetav}$ be the cumulative density function (CDF) of the assumed model with $\thetav$ being parameter, then the classic robust inference deals with the situation where the data points $\mathbf{x}_n$ are independently and identically generated by an unknown CDF $F$ in the $\eta$-neighborhood $\mathscr{P}_{\eta}(F_{\thetav})$ of the assumed model $F_{\thetav}$. The goal of the classic robust inference is to design inference algorithms that perform well for any $F\in \mathscr{P}_{\eta}(F_{\thetav}) $. For example, $\mathscr{P}_{\eta}(F_{\thetav})$ can be a L$\acute{\text{e}}$vy neighborhood $\mathscr{P}_{\eta}(F_{\thetav})=\{F|\forall t,\; F_{\thetav}(t-\eta)\leq F(t) \leq F_{\thetav}(t+\eta)+\eta \}$ or contamination neighborhood $\mathscr{P}_{\eta}(F_{\thetav})=\{F| (1-\eta) F_{\thetav}+\eta H\}$ in which $H$ can be any probability measure. The contamination neighborhood model can be viewed as having $\eta$ fraction of the data as outliers, while the L$\acute{\text{e}}$vy neighborhood (or other related neighborhood) is useful in scenarios with model uncertainties. Various concepts such as influence function (IF), breakdown point, and change of variance etc were developed to quantify the robustness of estimators against the presence of outliers. 

In this paper, we consider a setup with more powerful adversaries than those considered in the classic robust inference and investigate \emph{adversarial robustness}. In particular, in the considered setup, after data points $\{\mathbf{x}_1,\cdots,\mathbf{x}_N\}$ are generated, the adversary can observe the whole dataset and then modify all data points to $\{\mathbf{x}_1+\Delta \mathbf{x}_1,\cdots,\mathbf{x}_N+\Delta \mathbf{x}_N\}$ where each $\Delta \mathbf{x}_n$ is carefully designed and depends on the whole data set. It is easy to see that \emph{the adversary in the adversarially robust model is more powerful}. In particular, for any $F\in \mathscr{P}_{\eta}(F_{\thetav})$ in the classic model, the adversary in the adverarially robust model can mimic the behavior of $F$ by simply replacing the dataset with i.i.d samples generated from $F$. Clearly, this is not optimal strategy that the adversary in our model will adopt, as the adversary is not limited to this type of i.i.d attacks after observing the whole dataset. It can construct correlated attack signals that are based on the whole dataset. As the result, it is important to understand the following questions: 1) What is the attacker's optimal attack strategy in choosing $\Delta \mathbf{x}$?; 2) What are the impacts of these attacks?; 3) How shall we design inference algorithms to minimize the impact?

In our recent work~\citep{Lai:TIT:18}, we made some progress in addressing these problems for the case of scalar parameter estimation, in which the parameter to be estimated is a scalar and each sample $x_n$ is also a scalar. In particular, given a data set $\mathbf{x}=\{x_1,\cdots,x_N\}$ with $x_n$ being i.i.d realizations of random variable $X$ that has CDF $F_{\theta}(x)$ with unknown scalar parameter $\theta$, we would like to estimate the unknown parameter $\theta$. There is an adversary who can observe the whole dataset and can modify the dataset to $\mathbf{x}^{\Delta}=\mathbf{x}+\Delta \mathbf{x}:=\{x_1+\Delta x_1, \cdots,x_N+\Delta x_N\}$, in which $\Delta \mathbf{x}=\{\Delta x_1, \cdots,\Delta x_N\}$ is the attack vector chosen by the adversary after observing $\mathbf{x}$. Certain restrictions need to be put on $\Delta \mathbf{x}$, otherwise the estimation problem will not be meaningful. In~\citep{Lai:TIT:18}, we assume that 
$\Delta \mathbf{x}\in \mathscr{P}_{\eta}:=\{\frac{1}{N}||\Delta \mathbf{x}||_p^p\leq \delta^p\},$
in which $||\cdot||_p$ is the $\ell_p$ norm. This type of constraints are reasonable and are motivated by real life examples. For example, in generating adversary examples in images~\citep{Pimentel:ISIT:17,szegedy2013intriguing,easyfooldnn,Goodfellow:ICLR:15,Carlini:ISSP:17}, the total distortion should be limited, otherwise human eyes will be able to detect such changes. 
The classic setup with contamination model~\citep{Huber:AMS:64,Huber:Book:09,Hampel:Book:86} can be viewed as a special case of our formulation by letting $p\rightarrow 0$, i.e., the classic setup has a constraint on the total number of data points that the attacker can modify. For a given estimator, we would like to characterize how sensitive the estimator is with respect to the adversarial attack. In~\citep{Lai:TIT:18}, we considered a scenario where the goal of the attacker is to maximize the estimation error caused by the attack. We introduced a concept named ``adversarial influence function'' (AIF) to quantify the asymptotic rate at which the attacker can introduce estimation error through its optimal attack. From the defender's perspective, the smaller AIF is, the more adversarially robust the estimator is. In~\citep{Lai:TIT:18}, building on the characterization of AIF, we characterized the optimal estimator, among a certain class of estimators, that minimizes AIF. From this characterization, we show that there is a tradeoff between the robustness against adversarial attacks and robustness against outliers. In~\citep{Lai:TIT:18}, we further designed optimal estimator that achieve the optimal tradeoff among these quantities for the scalar case.

In this paper, we extend our work in~\citep{Lai:TIT:18} to multivariate setup, in which the goal is to jointly estimate \emph{multiple} parameters from vector observations. The multivariate setup includes many important cases such as the joint location-scale estimation and robust linear regression etc. In this multivariate setup, we have $N$ data points $\{\mathbf{x}_n, n=1,\cdots,N\}$ with each data point $\mathbf{x}_n\in \mathbb{R}^m$ being a vector. These data points are realizations of a random variable that has CDF $F_{\thetav}(\xv)$ with unknown parameter vector $\thetav\in \mathbb{R}^q$. We use $m\times N$ matrix $\Xv:=[\xv_1,\cdots,\xv_N]$ to denote the given data matrix. From this given data set, we would like to estimate the unknown parameter $\thetav$. The adversary will modify the data to $\mathbf{\Xv}^{\Delta}=\Xv+\Delta \mathbf{X}$, in which $\Delta \Xv$ is the attack matrix chosen by the adversary after observing $\Xv$. Similar to the situation in the classic robust inference problem~\citep{hubert2008}, the multivariate adversarial robustness setup is significantly more challenging than the scalar case.

Firstly, the characterization of the optimal attack strategy is much more difficult. There are many more degrees of freedom for the attacker to choose from, as the dimension of $\Delta \mathbf{X}$ is $m\times N$. Furthermore, each modification will affect all $q$ components of the estimated vector $\hat{\thetav}$ in a different but coupled manner. In this paper, we focus on the class of $M$-estimators specified by $q$-dimension functions $\psiv$. For this class of estimators, we characterize the optimal attack vector $\Delta \mathbf{X}$ and the corresponding AIF. We further simplify this general formula for robust linear regression and evaluate the adversarial robustness of various existing robust algorithms. 

Secondly, the characterization of the optimal defense strategy is also much harder. For example, in the $M$-estimator case, now $\psiv$ is a $q$-dimension function, and the corresponding optimization problem of maximizing AIF becomes a coupled multi-dimension calculus of variation problem, which is in general very challenging. In this paper, for the important case of joint location-scale estimation problem, we show that we can decouple the characterization of optimal defense problem into two scalar problems. Building on this, we identify the optimal $M$-estimator that minimizes AIF.  In addition, similar to the scalar problem, we show that there exist a tradeoff between the robustness against adversarial attack and robustness again outliers. We further characterize the optimal $M$-estimator that achieves the optimal tradeoff between these robustness metrics.


The remainder of the paper is organized as follows. In Section~\ref{sec:intro}, we introduce the considered model. In Section~\ref{sec:general}, we derive general AIF results and simplify the results for robust linear regression problems. In Section~\ref{sec:LSE}, we focus on the special case of joint location-scale estimation problem and characterize the optimal estimators that achieve the optimal AIF for both cases with and without constraints on the robustness against outliers. In Section~\ref{sec:simulation}, we use several numerical examples to illustrate results derived in this paper. Section~\ref{sec:con} provides concluding remarks. 

\section{Model}~\label{sec:intro}

In this section, we first introduce our problem formulation. We will then briefly review results from classic robust estimation that are directly related to our study.


\subsection{Problem Formulation}

We have $N$ data points $\{\mathbf{x}_n, n=1,\cdots,N\}$ with $\mathbf{x}_n:=[\xv_{n,1},\cdots,\xv_{n,m}]^T\in \mathbb{R}^m$. These data points are i.i.d realizations of a random variable that has CDF $F_{\thetav}(\xv)$ with unknown parameter $\thetav\in \Thetav \subset \mathbb{R}^q$. Here, $\Thetav$ is a compact set. We will use $f_{\thetav}(\xv)$ to denote the corresponding probability density function (pdf). We use $m\times N$ matrix $\Xv:=[\xv_1,\cdots,\xv_N]$ to denote the given data matrix. From this given data set, we would like to estimate the unknown parameter $\thetav$. However, as the adversary has access to the whole dataset, it will modify the data to $\mathbf{\Xv}^{\Delta}=\Xv+\Delta \mathbf{X}:=[\xv_1+\Delta \xv_1, \cdots,\xv_N+\Delta \xv_N]$, in which $\Delta \Xv:=[\Delta \xv_1, \cdots,\Delta \xv_N]$ is the attack matrix chosen by the adversary after observing $\Xv$. We will discuss the attacker's optimal attack strategy in choosing $\Delta \Xv$ in the sequel. In this work, we consider the case where the attacker can modify all data points, which is a more suitable setup for recent data analytical applications. However, certain restrictions need to be put on $\Delta \Xv$, otherwise the estimation problem will not be meaningful. In this paper, we assume that 
\begin{eqnarray}
\frac{1}{mN}||\Delta \Xv||_p^p\leq \delta^p,\label{eq:norconstranit}
\end{eqnarray}
in which $||\mathbf{A}||_p$ is the entry-wise matrix $p$-norm:
\begin{eqnarray}
||\mathbf{A}||_p=||\text{Vec}(\mathbf{A})||_p=\left(\sum\sum |\mathbf{A}_{i,j}|^p\right)^{1/p},\nonumber
\end{eqnarray}
with $\text{Vec}(\mathbf{A})$ being the vectorization of matrix $\mathbf{A}$. In~\eqref{eq:norconstranit}, we have the normalization term $\frac{1}{mN}$ as the matrix $\Delta\Xv$ is of size $m\times N$. The normalization factor implies that the per-dimension change (on average) is upper-bound by $\delta^p$. As mentioned in the introduction, this type of constraints are reasonable and are motivated by real life examples. 

Following notation used in robust statistics \citep{Huber:Book:09,Hampel:Book:86}, we will use $\tv_N(\Xv)$, a $q$ dimensional vector, to denote an estimator. For a given estimator $\tv_N$, we would like to characterize how sensitive the estimator is with respect to the adversarial attack. In this paper, we consider a scenario where the goal of the attacker is to maximize the estimation error caused by the attack. In particular, the attacker aims to choose $\Delta\Xv$ by solving the following optimization problem
\begin{eqnarray}\label{eq:optimalintro}
\max\limits_{\Delta\Xv} && ||\tv_N(\Xv+\Delta\Xv)-\tv_N(\Xv)||_1,\\
\text{s.t.}&& \frac{1}{mN}||\Delta \Xv||_p^p\leq \delta^p,\nonumber
\end{eqnarray}
in which $||\cdot||_1$ is the $\ell_1$ norm.

We use $f(\tv_N,\Xv,p,\delta)$ to denote the optimal value obtained from the optimization problem~\eqref{eq:optimalintro}, and define the adversarial influence function (AIF) of estimator $\tv_N$ at $\Xv$ under $\ell_p$ norm constraint as
\begin{eqnarray}
\text{AIF}(\tv_N,\Xv,p)=\lim_{\delta\downarrow 0}\frac{f(\tv_N,\Xv,p,\delta)}{\delta}.\nonumber
\end{eqnarray}
This quantity, a generalization of the concept of influence function (IF) used in classic robust estimation (a brief review of IF will be provided in Section~\ref{sec:IF}), quantifies the asymptotic rate at which the attacker can introduce estimation error through its attack. 

From the defender's perspective, the smaller AIF is, the more robust the estimator is. In this paper, building on the characterization of $\text{AIF}(\tv_N,\Xv,p)$, we will characterize the optimal estimator $\tv_N$, among a certain class of estimators $\mathcal{T}$, that minimizes $\text{AIF}(\tv_N,\Xv,p)$. In particular, we will investigate 
\begin{eqnarray}
\min\limits_{\tv_N\in \mathcal{T}} \text{AIF}(\tv_N,\Xv,p).\nonumber
\end{eqnarray}

Note that $\text{AIF}(\tv_N,\mathbf{X},p)$ depends on the data matrix $\mathbf{X}$. Based on the characterization of AIF for a given data realization matrix $\mathbf{X}$ with $N$ columns (each column representing one data point), we will then study the population version of AIF where each column of $\mathbf{X}$ is i.i.d generated by $F_{\thetav}$. We will examine the behavior of $\text{AIF}(\tv_N,\mathbf{X},p)$ as $N$ increases. We will see that for a large class of estimators $\text{AIF}(\tv_N,\mathbf{X},p)$ has a well-defined limit as $N\rightarrow \infty$. We will use $\text{AIF}(\tv,F_{\thetav},p)$ to denote this limit when it exists.


From the defense's perspective, we would like to design an estimator that is least sensitive to the adversarial attack. Again, we will characterize the optimal estimator $\tv$, among a certain class of estimators $\mathcal{T}$, that minimizes $\text{AIF}(\tv,F_{\thetav},p)$. That is, for a certain class of estimators $\mathcal{T}$, we will solve
\begin{eqnarray}
\min\limits_{\tv\in \mathcal{T}}  \text{AIF}(\tv,F_{\thetav},p).\label{eq:AIF}
\end{eqnarray}

It will be clear in the sequel that the solution to the optimization problem~\eqref{eq:AIF}, even though is robust against adversarial attacks, has poor performance in guarding against outliers. This motivates us to design estimators that strike a desirable tradeoff between these two robustness measures. In particular, we will solve~\eqref{eq:AIF} with an additional constraint on IF. After the corresponding quantities are introduced in later sections, precise formulation of this optimization problem with additional IF constraint will be given in~\eqref{eq:AIFcon}. 

\subsection{M-Estimator and Influence Function (IF)}\label{sec:IF}

In this paper, we will mainly focus on a class of commonly used estimator in robust statistic: $M$-estimator~\citep{Huber:AMS:64}, in which one obtains an estimate $\tv_N(\Xv)$ of $\thetav$ by solving 
\begin{eqnarray}\label{eq:Mestimategen}
\sum\limits_{n=1}^N \psiv(\xv_n,\tv_N)=\mathbf{0}.
\end{eqnarray}

Here $\psiv(\xv,\thetav):\mathbb{R}^m\times \mathbb{R}^q\rightarrow \mathbb{R}^q$ is a vector function of data point $\xv$ and parameter $\thetav$ to be estimated. We use $\psi_i$, $i=1,\cdots,q$, to denote each component of $\psiv$. Different choices of $\psiv$ lead to different robust estimators. For example, the most likely estimator (MLE) can be obtained by setting $\psiv=-f_{\thetav}^{'}/f_{\thetav}$. 

As the form of $\psiv$ determines $\tv_N$, in the remainder of the paper, we will use $\psiv$ and $\tv_N$ interchangeably. For example, we will denote $\text{AIF}(\tv_N,\Xv,p)$ as $\text{AIF}(\psiv,\Xv,p)$. 

It is typically assumed that $\psiv(\xv,\thetav)$ is continuous and almost everywhere differentiable. This assumption is valid for all $\psiv$'s that are commonly used. It is also typically required that the estimator is Fisher consistent~\citep{Hampel:Book:86}:
\begin{eqnarray}
\mathbb{E}_{F_{\thetav}}[\psiv(\Xv,\thetav)]=\mathbf{0},\label{eq:Fisher}
\end{eqnarray}
in which $\mathbb{E}_{F_{\thetav}}$ means expectation under $F_{\thetav}$. Intuitively speaking, this implies that the true parameter $\thetav$ is the solution of the $M$-estimator if there are increasingly more i.i.d. data points generated from $F_{\thetav}$.

In the contamination model of the classic robust estimation setup, it is assumed that a fraction $\delta$ of data points are outliers, while the remainder of data points are generated from the true distribution $F_{\thetav}$. For a given estimator $\tv$, the concept of IF introduced by Hamper~\citep{Hampel:PHD:68} is defined
\begin{eqnarray}
\text{IF}(\xv,\tv,F_{\thetav})=\lim_{\delta\downarrow 0}\frac{\tv((1-\delta)F_{\thetav}+\delta i_{\xv})-\tv(F_{\thetav})}{\delta}.\nonumber
\end{eqnarray}
Here, $\text{IF}(\xv,\tv,F_{\thetav})$ is a $q$ dimensional vector. In this definition, $i_{\xv}$ is a distribution that puts mass 1 at point $\xv$. In addition, $\tv(F_{\thetav})$ is the obtained estimate when all data points are generated i.i.d from $F_{\thetav}$, and $\tv((1-\delta)F_{\thetav}+\delta i_{\xv})$ is the obtained estimate when $1-\delta$ fraction of data points are generated i.i.d from $F_{\thetav}$ while $\delta$ fraction of the data points are at $\xv$. Hence, $\text{IF}(\xv,\tv,F_{\thetav})$ measures the asymptotic influence of having outliers at point $\xv$ as $\delta\downarrow 0$. Similar as above, as $\tv$ is determined by $\psiv$ in M-estimator, in the following, we will also denote $\text{IF}(\xv,\tv,F_{\thetav})$ as $\text{IF}(\xv,\psiv,F_{\thetav})$.

Furthermore, to characterize the impact of the worst outliers, Hamper~\citep{Hampel:Book:86} introduced the (unstandardized) gross-error sensitivity:
\begin{eqnarray}
\gamma^*_u(\psiv,F_{\thetav})=\sup\limits_{\xv}\{||\text{IF}(\xv,\psiv,F_{\thetav})||_2\},\label{eq:gross}
\end{eqnarray}
in which $||\cdot||_2$ is the $\ell_2$ norm.

For $M$-estimator, $\text{IF}(\xv,\psiv, F_{\thetav})$ was shown to be~\citep{Hampel:Book:86} 
\begin{eqnarray}
\text{IF}(\xv,\psiv, F_{\thetav})=\Mv(\psiv,F_{\thetav})^{-1}\psiv(\xv,\tv(F_{\thetav})),\no
\end{eqnarray}
with the $q\times q$ matrix $\Mv$ given by
\begin{eqnarray}
\Mv(\psiv,F_{\thetav})=-\int\left.\frac{\partial}{\partial\thetav}\psiv(\mathbf{x},\thetav)\right\vert_{\tv(F_{\thetav})}\text{d}F_{\thetav}(\xv),\no
\end{eqnarray}
see (4.2.9) of~\citep{Hampel:Book:86}. 

\section{Characterizing AIF}\label{sec:general}

In this section, for a given data matrix $\Xv$, we analyze the AIF for any given M-estimator $\psiv$ as specified in~\eqref{eq:Mestimategen}. As $\psiv$ is $q$-dimension vector, there are $q$ equations. To simplify the presentation, we write each equation as $$G_i(\Xv,\tv_N):=\sum\limits_{n=1}^N \psi_i(\xv_n,\tv_N)=0,\hspace{3mm}i=1,\cdots,q$$ and denote $\mathbf{G}=[G_1,\cdots,G_q]^T$. Using this notation,~\eqref{eq:Mestimategen} can be written as
\begin{eqnarray}
\mathbf{G}(\Xv,\tv_N)=\mathbf{0}.\label{eq:tveq}
\end{eqnarray}

\subsection{General Case}\label{sec:aifgeneral}

To proceed further, we write
\begin{eqnarray}
\mathbf{G}^{'}_{\thetav}(\Xv,\tv_N)&=&\left.\frac{\partial \mathbf{G}}{\partial \thetav}\right\vert_{\Xv,\tv_N},\no\\
\mathbf{G}^{'}_{\Xv}(\Xv,\tv_N)&=&\left.\frac{\partial {\mathbf{G}}}{\partial \text{Vec}(\mathbf{X})}\right\vert_{\Xv,\tv_N},\no
\end{eqnarray}
and
\begin{eqnarray}
\mathbf{\tv}^{'}_{N}(\Xv)=\left.\frac{\partial \mathbf{t}_N}{\partial \text{Vec}(\mathbf{X})}\right\vert_{\Xv,\tv_N}.\no
\end{eqnarray}

We have the following characterization.

\begin{thm}\label{thm:AIFgen}
	For $p\geq 1$, suppose $\mathbf{G}^{'}_{\thetav}(\Xv,\tv_N)$ is invertible, we have
	\begin{eqnarray}
	\AIF(\psiv,\Xv,p)=(mN)^{1/p}\max\limits_{\sigmav\in\Sigmav}\left|\left|\sigmav^T\left[\mathbf{G}^{'}_{\thetav}(\Xv,\tv_N)\right]^{-1}\mathbf{G}^{'}_{\Xv}(\Xv,\tv_N)\right|\right|_{\frac{p}{p-1}},\label{eq:AIFg}\end{eqnarray}
	in which $\Sigmav$ is the set of length $q$ vectors with each entry being either $1$ or $-1$.
\end{thm}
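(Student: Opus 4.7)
The plan is to linearize the constraint equation $\mathbf{G}(\Xv,\tv_N)=\mathbf{0}$ via the implicit function theorem, reduce the optimization in~\eqref{eq:optimalintro} to a linear/H\"older problem in the small-$\delta$ limit, and then recognize the $\ell_1$ norm in the objective as a maximum over sign vectors.

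First, since $\mathbf{G}^{'}_{\thetav}(\Xv,\tv_N)$ is invertible, the implicit function theorem applied to $\mathbf{G}(\Xv,\tv_N)=\mathbf{0}$ yields that $\tv_N$ is continuously differentiable in $\Xv$ with Jacobian
\begin{eqnarray}
\tv^{'}_N(\Xv)=-\left[\mathbf{G}^{'}_{\thetav}(\Xv,\tv_N)\right]^{-1}\mathbf{G}^{'}_{\Xv}(\Xv,\tv_N).\no
\end{eqnarray}
A first-order Taylor expansion therefore gives
\begin{eqnarray}
\tv_N(\Xv+\Delta\Xv)-\tv_N(\Xv)=\tv^{'}_N(\Xv)\,\text{Vec}(\Delta\Xv)+o(\|\Delta\Xv\|_p).\no
\end{eqnarray}

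Next, I would rewrite the constraint $\frac{1}{mN}\|\Delta\Xv\|_p^p\leq \delta^p$ as $\|\text{Vec}(\Delta\Xv)\|_p\leq (mN)^{1/p}\delta$, and rewrite the $\ell_1$ objective using the dual representation
\begin{eqnarray}
\|\mathbf{y}\|_1=\max_{\sigmav\in\Sigmav}\sigmav^T\mathbf{y},\no
\end{eqnarray}
where $\Sigmav=\{-1,+1\}^q$. Substituting the linearization in and swapping the two maxima (finite outer maximization commutes trivially with the inner one), the leading-order term of $f(\tv_N,\Xv,p,\delta)$ becomes
\begin{eqnarray}
\max_{\sigmav\in\Sigmav}\;\max_{\|\text{Vec}(\Delta\Xv)\|_p\leq (mN)^{1/p}\delta}\;\sigmav^T\tv^{'}_N(\Xv)\,\text{Vec}(\Delta\Xv).\no
\end{eqnarray}
For each fixed $\sigmav$, the inner problem is a linear maximization over an $\ell_p$ ball; by H\"older's inequality (tight for $p\geq 1$ with the appropriate aligned dual vector) this equals
\begin{eqnarray}
(mN)^{1/p}\delta\,\bigl\|\sigmav^T\tv^{'}_N(\Xv)\bigr\|_{\frac{p}{p-1}},\no
\end{eqnarray}
with the convention $p/(p-1)=\infty$ when $p=1$. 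Substituting the expression for $\tv^{'}_N(\Xv)$ and noting that the overall sign is absorbed by the symmetry $\sigmav\mapsto-\sigmav$ of $\Sigmav$, dividing by $\delta$ and letting $\delta\downarrow 0$ yields~\eqref{eq:AIFg}.

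The main obstacle is making the linearization rigorous: one has to check that the $o(\|\Delta\Xv\|_p)$ remainder does not affect the limit. Concretely, the feasible attacks shrink to zero as $\delta\downarrow 0$, so $\Delta\Xv$ lies in a vanishing neighborhood of the origin; continuity of $\tv^{'}_N$ (hence of $\mathbf{G}^{'}_{\thetav}$ and $\mathbf{G}^{'}_{\Xv}$, guaranteed by the standing continuity/differentiability assumption on $\psiv$) provides a uniform modulus of continuity, so that the remainder is $o(\delta)$ uniformly over feasible $\Delta\Xv$. The second-order term therefore drops out after dividing by $\delta$ and taking the limit, and swapping the linearized objective with the true objective is justified. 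A minor technicality is handling the $p=1$ and $p=\infty$ endpoints so that the dual exponent $p/(p-1)$ is interpreted correctly; this is otherwise a standard H\"older duality computation.
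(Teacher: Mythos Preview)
Your proposal is correct and follows essentially the same route as the paper: implicit differentiation of $\mathbf{G}=\mathbf{0}$, linearization, the identity $\|\mathbf{y}\|_1=\max_{\sigmav\in\Sigmav}\sigmav^T\mathbf{y}$ (which is exactly the paper's $f^*=g^*$ lemma), and evaluation of the inner linear maximization over an $\ell_p$ ball. The only cosmetic difference is that you invoke H\"older duality directly for the inner problem, whereas the paper computes the same optimum via explicit Lagrange/KKT conditions (and treats $p=1$ separately as a linear program); both yield $(mN)^{1/p}\|\cdot\|_{p/(p-1)}$.
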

\begin{proof}
	First, from~\eqref{eq:tveq}, we have 
	\begin{eqnarray}
	\tv_N^{'}(\Xv)=-\left[\mathbf{G}^{'}_{\thetav}(\Xv,\tv_N)\right]^{-1}\mathbf{G}^{'}_{\Xv}(\Xv,\tv_N).\label{eq:tvderiv}
	\end{eqnarray}
	
	Using Taylor expansion, we have
	\begin{eqnarray}
	\tv_N(\Xv+\Delta\Xv)-\tv_N(\Xv)=\tv_N^{'}(\Xv)\text{Vec}(\Delta\Xv)+\text{higher order terms}.\no
	\end{eqnarray}
	
	When $\delta$ is small, the adversary can focus on the following optimization problem to obtain an $o(\delta)$ optimal solution
	\begin{eqnarray}
	\text{(P1):}\hspace{10mm}	\max\limits_{\text{Vec}(\Delta \Xv)}&&||\tv_N^{'}(\Xv)\text{Vec}(\Delta\Xv)||_1,\label{eq:original}\\\nonumber
	\text{s.t.}&& ||\Delta\Xv||_p^p\leq Nm\delta^p.
	\end{eqnarray}
	
	Let $f^*$ be the optimal value obtained in the optimization problem (P1). Let $g^*$ be the optimal value of the following optimization problem
	\begin{eqnarray}
	\text{(P2):}\hspace{10mm}	\max\limits_{\sigmav\in\Sigmav}\max\limits_{\text{Vec}(\Delta \Xv)}&&\sigmav^T\tv_N^{'}(\Xv)\text{Vec}(\Delta\Xv),\no\\\nonumber
	\text{s.t.}&& ||\Delta\Xv||_p^p\leq Nm\delta^p.
	\end{eqnarray}
	
	In Appendix~\ref{app:fg}, we show that $f^*=g^*$. Hence, we can focus on problem (P2). 
	
	The inner maximization problem in (P2) is the same as
	\begin{eqnarray}
	\min\limits_{\Delta \Xv}&&-\sigmav^T\tv_N^{'}(\Xv)\text{Vec}(\Delta\Xv),\no\\\nonumber
	\text{s.t.}&& ||\Delta\Xv||_p^p\leq Nm\delta^p.
	\end{eqnarray}
	
	Using~\eqref{eq:tvderiv}, we have 
	\begin{eqnarray}
	-\sigmav^T\tv_N^{'}(\Xv)\text{Vec}(\Delta\Xv)=\sigmav^T\left[\mathbf{G}^{'}_{\thetav}(\Xv,\tv_N)\right]^{-1}\mathbf{G}^{'}_{\Xv}(\Xv,\tv_N)\text{Vec}(\Delta\Xv).\no
	\end{eqnarray}	
	
	To simplify the notation, we denote
	\begin{eqnarray}
	\mathbf{a}=\sigmav^T\left[\mathbf{G}^{'}_{\thetav}(\Xv,\tv_N)\right]^{-1}\mathbf{G}^{'}_{\Xv}(\Xv,\tv_N),\no
	\end{eqnarray}
	which is a row vector with $mN$ entries. Even though $\mathbf{a}$ is only a row vector, we denote these elements as $a_{i,j}$ for $i=1,\cdots,m$ and $j=1,\cdots,N$ to better connect with each elements of $\Delta\Xv$. Hence, $a_{i,j}$ corresponds to $\Delta \Xv_{i,j}$. Using this notation, the optimization problem can be written as 
	\begin{eqnarray}\label{eq:lineargen}
	\min\limits_{\Delta\Xv}&& \mathbf{a}\text{Vec}(\Delta \Xv),\nonumber\\
	\text{s.t.}&& ||\Delta \Xv||_p^p\leq mN\delta^p.\no
	\end{eqnarray}
	
	For $p=1$, this is a linear programing problem, whose solution is simple. In particular, let $a^*=\max\limits_{i,j} | a_{i,j}|$, and $(i^*,j^*)$ be the corresponding index, it is easy to check that we have
	\begin{eqnarray}
	\Delta \Xv_{i^*,j^*}^*=-\text{sign}\left\{ a_{i^*,j^*}\right\}mN\delta,\nonumber
	\end{eqnarray}
	and $\Delta \Xv_{i,j}^*=0$ for other $i,j$s. Hence,
	\begin{eqnarray}
	\AIF(\psiv,\Xv,p)=\max\limits_{\sigmav\in\Sigmav}mNa^*=mN\max\limits_{\sigmav\in\Sigmav}||\mathbf{a}||_{\infty}.\nonumber
	\end{eqnarray}

	For $p> 1$,~\eqref{eq:lineargen} is a convex optimization problem. To solve this, we form Lagrange
	\begin{eqnarray}
	\mathcal{L}(\Delta \Xv,\lambda)= \mathbf{a}\text{Vec}(\Delta \Xv)+ \lambda\left(||\Delta \Xv||_p^p-mN\delta^p\right).\nonumber
	\end{eqnarray}
	
	The corresponding optimality conditions are:
	\begin{eqnarray}
	a_{i,j}+\lambda^* p \text{sign} (\Delta \Xv_{i,j}^*) |\Delta \Xv_{i,j}^*|^{p-1}&=&0, \hspace{2mm} \forall n\label{eq:laggen}\\
	\lambda^*&\geq&0,\nonumber\\
	\lambda^*\left(||\Delta \Xv||_p^p-mN\delta^p\right)&=&0.\nonumber
	\end{eqnarray}
	
	From~\eqref{eq:laggen}, we know that $\lambda^*\neq 0$, hence 
	\begin{eqnarray}
	||\Delta \Xv^*||_p^p=mN\delta^p,\label{eq:equalgen}
	\end{eqnarray}
	and
	\begin{eqnarray}
	\text{sign} (\Delta \Xv_{i,j}^*) |\Delta \Xv_{i,j}^*|^{p-1}=\frac{-a_{i,j}}{\lambda^* p}.\label{eq:signgen}
	\end{eqnarray}
	
	From~\eqref{eq:signgen} and the fact that $\lambda^* p$ is positive, we know $\text{sign} (\Delta \Xv_{i,j}^*)=-\text{sign}(a_{i,j})$, and hence we have
	\begin{eqnarray}
	|\Delta \Xv_{i,j}^*|^{p-1}=\frac{|a_{i,j}|}{\lambda^* p},\nonumber
	\end{eqnarray}
	which can be simplified further to
	\begin{eqnarray}
	\Delta \Xv_{i,j}^*=-\left(\frac{|a_{i,j}|}{\lambda^* p}\right)^{1/(p-1)} \text{sign} (a_{i,j}).\nonumber
	\end{eqnarray}
	
	Combining these with~\eqref{eq:equalgen}, we obtain the value of $\lambda^*$:
	\begin{eqnarray}
	\lambda^*=\frac{1}{p}\left(\frac{\sum\limits_{n=1}^N |a_{i,j}|^{p/(p-1)}}{mN\delta^p}\right)^{(p-1)/p}.\nonumber
	\end{eqnarray}
	
	As the result, we have
	\begin{eqnarray}
	\Delta \Xv_{i,j}^*=-\frac{|a_{i,j}|^{1/(p-1)}(mN)^{1/p}}{(\sum|a_{i,j}|^{p/(p-1)})^{1/p}}\text{sign}(a_{i,j})\delta.\nonumber
	\end{eqnarray}
	
	Hence, the optimal value of the inner maximization of (P2) is 
	\begin{eqnarray}
	\sum\limits_{i,j} a_{i,j}\frac{|a_{i,j}|^{1/(p-1)}(mN)^{1/p}}{(\sum|a_{i,j}|^{p/(p-1)})^{1/p}}\text{sign}(a_{i,j})	=(mN)^{1/p}||\mathbf{a}||_{p/(p-1)},\nonumber
	\end{eqnarray}
	which finishes the proof.
\end{proof}
By setting $m=1$ and $q=1$, \eqref{eq:AIFg} recovers the result on the scalar case presented in~\citep{Lai:TIT:18}.

\subsection{Robust Regression}\label{sec:linearregression}

In this section, we use robust linear regression, an important multivariate parameter estimation problem, as an example to illustrate the result derived in Section~\ref{sec:aifgeneral}. In linear regression problems, the data points are $\tilde{\xv}_n=\left(\begin{array}{c}
\mathbf{x}_n\\
y_n
\end{array}\right)$, $n=1,\cdots,N$ with $\mathbf{x}_n\in \mathbb{R}^q$ and $y_n\in \mathbb{R}$. Hence, $\tilde{\xv}_n\in \mathbb{R}^{q+1}$. In the following, we let $\tilde{\Xv}=[\tilde{\xv}_1,\cdots,\tilde{\xv}_N]\in \mathbb{R}^{(q+1)\times N}$, and still denote $\Xv=[\xv_1,\cdots,\xv_N]\in \mathbb{R}^{q\times N}$. From the data, we would like to fit a linear model, i.e., we would like to find $\thetav=[\theta_1,\cdots,\theta_q]^T\in \mathbb{R}^q$ such that $\mathbf{x}_n^T\thetav$ is a good approximation of $y_n$. Hence, the parameters to be estimated are $\thetav\in \mathbb{R}^q$. Furthermore, each data point $\tilde{\xv}_n\in \mathbb{R}^{q+1}$, hence $m=q+1$. We denote \begin{eqnarray}
r_n=y_n-\mathbf{x}_n^T\thetav\no
\end{eqnarray} as the residual error.  

The commonly used ordinary least square (OLS) approach finds $\thetav$ by solving
\begin{eqnarray}
\min\limits_{\thetav} \sum\limits_{n=1}^N r_n^2,\no
\end{eqnarray}
which is equivalent to solving
\begin{eqnarray}
\sum\limits_{n=1}^N r_n\mathbf{x}_n=\mathbf{0}.\label{eq:OLS}
\end{eqnarray}
The solution is well known $
\hat{\thetav}_{OLS}=(\Xv\Xv^T)^{-1}\Xv\mathbf{y}.
$ In the subsequent discussion, we will use a related quantity named hat matrix \begin{eqnarray}\Hv=\Xv^T(\Xv\Xv^T)^{-1}\Xv.\label{eq:Hat}\end{eqnarray}

It is known that OLS solution is not robust to outliers~\citep{Hampel:Book:86}. Various robust linear regression schemes were proposed~\citep{Wilcox:Book:05,Mallows:BTL:75,Huber:AoS:73,Merrill:TP:71}. They generally set $\psiv(\tilde{\xv}_n,\tv_N)$ in the form $\eta(r_n v_n)w_n\xv_n$ 
with function $\eta: \mathbb{R}\rightarrow \mathbb{R}$ and weights $w_n$ and $v_n$. That is, for robust linear regression, one obtains the estimate $\tv_N$ of $\thetav$ by solving
\begin{eqnarray}\label{eq:Mestimate}
\sum\limits_{n=1}^N\psiv(\tilde{\xv}_n,\tv_N)=\sum\limits_{n=1}^N \eta(r_n v_n)w_n\xv_n=\mathbf{0}.
\end{eqnarray}

The weights $w_n$ and $v_n$ can be chosen to not only depend on $\xv_n$ but also the whole data matrix $\Xv$. For example, it is common~\citep{Wilcox:Book:05,Mallows:BTL:75} to use $w_n=\sqrt{1-h_{nn}}$, in which $h_{nn}=\xv_n^T(\Xv\Xv^T)^{-1}\xv_n$ is the $n$th diagonal element of the hat matrix~\eqref{eq:Hat}. It is known~\citep{Huber:Book:09} that $0\leq h_{nn}\leq 1$. Comparing~\eqref{eq:Mestimate} with~\eqref{eq:OLS}, we can see that one replaces $\xv_n$ in~\eqref{eq:OLS} with $w_n\xv_n$ and replaces $r_n $ in~\eqref{eq:OLS} with $\eta(r_n v_n)$. The main idea is to use $w_n$ to limit the impact of outliers in $\xv_n$ and use $\eta(r_n v_n)$ to limit the impact of outliers in the residual $r_n$ while taking the location of $\xv_n$ into consideration. From~\eqref{eq:Mestimate}, we have 
\begin{eqnarray}
\psi_i(\tilde{\xv}_n,\tv_N)&=&\eta(r_n v_n)w_n\xv_{n,i}=\eta((y_n-\mathbf{x}_n^T\tv_N)v_n)w_n\xv_{n,i},\no\\
G_i(\tXv,\tv_N)&=&\sum\limits_{n=1}^N\psi_i(\tilde{\xv}_n,\tv_N)=\sum\limits_{n=1}^N\eta((y_n-\mathbf{x}_n^T\tv_N)v_n)w_n\xv_{n,i}.\no
\end{eqnarray}

Different choices of functions lead to different classes of robust linear regression methods. For example:
\begin{itemize}
	\item $w_n=1$, $v_n=1$ leads to Huber's proposal~\citep{Huber:AoS:73}, in which the idea was to replace $r_n$ in~\eqref{eq:OLS} with $\eta(r_n)$ so as to limit the influence of large residuals (similar to the M-estimator in a single variable case). 
	\item $v_n=1$ leads to Mallows's proposal.
	\item $v_n=1/w_n$ is Schewppe's approach~\citep{Merrill:TP:71}. 
\end{itemize}

In the following, we calculate $\mathbf{G}^{'}_{\tilde{\Xv}}(\tilde{\Xv},\tv_N)$ and $\mathbf{G}^{'}_{\thetav}(\tXv,\tv_N)$.

First, we compute $\mathbf{G}^{'}_{\tilde{\Xv}}(\tilde{\Xv},\tv_N)$. For $i=1,\cdots,q$; $j=1,\cdots,N$; and $k=1,\cdots,q$, we have
\begin{eqnarray}
\left.\frac{\partial G_i}{\partial \mathbf{x}_{j,k}}\right\vert_{\tilde{\Xv},\tv_N}&=&\frac{\partial \left(\sum\limits_{n=1}^N w_n\eta(r_n v_n)\mathbf{x}_{n,i}\right)}{\partial \mathbf{x}_{j,k}}\nonumber\\
&=&\sum\limits_{n=1}^N \left[ \eta(r_n v_n) \mathbf{x}_{n,i}\frac{\partial  w_n}{\partial \mathbf{x}_{j,k}}+w_n \mathbf{x}_{n,i}\eta^{'}(r_n v_n)\left(-\tv_{N}(k)v_n\mathbb{I}(j=n)+r_n\frac{\partial v_n }{\partial \mathbf{x}_{j,k}}\right)\right.\nonumber\\
&&\left.+w_n\eta(r_n v_n)\mathbb{I}(j=n,k=i)\right], \no
\end{eqnarray}
in which $\mathbb{I}(\cdot)$ is the indicator function and $\tv_{N}(k)$ is the $k$th element of $\tv$. In addition,
\begin{eqnarray}
\left.\frac{\partial G_i}{\partial y_{j}}\right\vert_{\tilde{\Xv},\tv_N}=w_j\eta^{'}(r_jv_j)v_j\mathbf{x}_{j,i}:=c_j\mathbf{x}_{j,i},\no
\end{eqnarray}
in which we denote \begin{eqnarray}c_j=w_j\eta^{'}(r_j v_j)v_j.\label{eq:cj}\end{eqnarray}

Furthermore, each entry of $\mathbf{G}^{'}_{\thetav}(\tXv,\tv_N)$ can be computed as
\begin{eqnarray}
\left.\frac{\partial G_i}{\partial \theta_{j}}\right\vert_{\tilde{\Xv},\tv_N}=-\sum\limits_{n=1}^N w_n\eta^{'}(r_n v_n)v_n\mathbf{x}_{n,i}\mathbf{x}_{n,j}:=-\sum\limits_{n=1}^N c_n\mathbf{x}_{n,i}\mathbf{x}_{n,j}.	\no
\end{eqnarray}
From this, we know that
\begin{eqnarray}
\mathbf{G}^{'}_{\thetav}(\tXv,\tv_N)=-\Xv\text{diag}[c_1,\cdots,c_N]\Xv^T.\no
\end{eqnarray}

Using the result in Theorem~\ref{thm:AIFgen}, assuming $\mathbf{G}^{'}_{\thetav}(\tXv,\tv_N)$ is invertible, we have the following characterization of AIF of robust linear regression.
\begin{prop}\label{prop:AIF}
	For robust linear regression,
	\begin{eqnarray}
	\AIF(\psiv,\tXv,p)=(mN)^{1/p}\max\limits_{\sigmav\in\Sigmav}\left|\left|\sigmav^T\left[\Xv\text{diag}[c_1,\cdots,c_N]\Xv^T\right]^{-1}\mathbf{G}^{'}_{\tilde{\Xv}}(\tilde{\Xv},\tv_N)\right|\right|_{\frac{p}{p-1}},\no\end{eqnarray}
	with $c_j$ defined in~\eqref{eq:cj}.
\end{prop}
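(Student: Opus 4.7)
The plan is a direct specialization of Theorem~\ref{thm:AIFgen} to the robust linear regression framework, so the argument is essentially careful bookkeeping rather than new analysis. First, I would observe that the computations immediately preceding the proposition statement have already established the two needed Jacobian blocks: the parameter derivative $\mathbf{G}^{'}_{\thetav}(\tXv,\tv_N) = -\Xv\,\text{diag}[c_1,\cdots,c_N]\,\Xv^T$, with $c_j$ defined in~\eqref{eq:cj}, together with an explicit (if more cumbersome) entry-by-entry expression for $\mathbf{G}^{'}_{\tilde{\Xv}}(\tilde{\Xv},\tv_N)$. Under the standing invertibility assumption, the inverse of the first block is $-\bigl[\Xv\,\text{diag}[c_1,\cdots,c_N]\,\Xv^T\bigr]^{-1}$.

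Second, I would substitute both expressions directly into the general AIF formula~\eqref{eq:AIFg} of Theorem~\ref{thm:AIFgen}, noting that the regression setup assigns $m=q+1$ with each observation $\tilde{\xv}_n \in \mathbb{R}^{q+1}$, so the abstract dimensional hypotheses of Theorem~\ref{thm:AIFgen} are met without further work. The substitution yields
\begin{eqnarray}
\AIF(\psiv,\tXv,p)=(mN)^{1/p}\max_{\sigmav\in\Sigmav}\left|\left|-\sigmav^T\left[\Xv\,\text{diag}[c_1,\cdots,c_N]\,\Xv^T\right]^{-1}\mathbf{G}^{'}_{\tilde{\Xv}}(\tilde{\Xv},\tv_N)\right|\right|_{\frac{p}{p-1}}.\nonumber
\end{eqnarray}

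Third, I would eliminate the leading minus sign using the symmetry of $\Sigmav$ under $\sigmav \mapsto -\sigmav$: since both the $\ell_{p/(p-1)}$ norm and the outer maximum are invariant under global negation, the sign drops out and the right-hand side collapses to precisely the expression stated in the proposition. The only subtlety worth flagging is that $\mathbf{G}^{'}_{\tilde{\Xv}}$ is deliberately left in its general, unreduced form, because its structure genuinely depends on the choice of weights $w_n$ and $v_n$ and cannot be simplified further without committing to a specific robust regression variant (Huber's, Mallows's, or Schweppe's). There is no real technical obstacle in this proof; the heavy lifting has already been carried out in Theorem~\ref{thm:AIFgen} and in the line-by-line derivative computations that precede the proposition.
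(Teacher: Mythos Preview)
Your proposal is correct and matches the paper's approach exactly: the paper presents Proposition~\ref{prop:AIF} as an immediate consequence of substituting the previously computed $\mathbf{G}^{'}_{\thetav}(\tXv,\tv_N)=-\Xv\,\text{diag}[c_1,\cdots,c_N]\,\Xv^T$ into Theorem~\ref{thm:AIFgen}, with no additional argument. Your explicit remark on absorbing the minus sign via the $\sigmav\mapsto -\sigmav$ symmetry of $\Sigmav$ is a detail the paper leaves implicit but is the correct justification.
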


We now apply Proposition~\ref{prop:AIF} to various specific (robust) linear regression approaches.

\begin{exmpl}\textbf{OLS:} For OLS, we have $w_n=1$, $v_n=1$ and $\eta(r)=r$. In this case, we have
\begin{eqnarray}
\left.\frac{\partial G_i}{\partial \mathbf{x}_{j,k}}\right\vert_{\tilde{\Xv},\tv_N}=-\mathbf{x}_{j,i}\tv_{N}(k)+r_j\mathbb{I}(k=i), \no
\end{eqnarray}
and
$
\left.\frac{\partial G_i}{\partial y_{j}}\right\vert_{\tilde{\Xv},\tv_N}=\mathbf{x}_{j,i}.
$
Furthermore, $c_j=1$ and hence
$
\mathbf{G}^{'}_{\thetav}(\Xv,\tv_N)=-\Xv\Xv^T.
$
\end{exmpl}
\begin{exmpl}\textbf{Huber's proposal~\citep{Huber:AoS:73}:} In Huber's proposal~\citep{Huber:AoS:73}, we have $w_n=1$, $v_n=1$. In this case, we have
\begin{eqnarray}
\left.\frac{\partial G_i}{\partial \mathbf{x}_{j,k}}\right\vert_{\tilde{\Xv},\tv_N}=-\mathbf{x}_{j,i}\tv_{N}(k)\eta^{'}(r_j)+\eta(r_j)\mathbb{I}(k=i), \no
\end{eqnarray}
and
$\left.\frac{\partial G_i}{\partial y_{j}}\right\vert_{\tilde{\Xv},\tv_N}=\eta^{'}(r_j)\mathbf{x}_{j,i}.$ Furthermore
$
\mathbf{G}^{'}_{\thetav}(\Xv,\tv_N)=-\Xv\text{diag}[\eta^{'}(r_1),\cdots,\eta^{'}(r_N)]\Xv^T.
$
\end{exmpl}

\begin{exmpl}\textbf{Mallow's proposal:} In Mallow's proposal, we have $v_n=1$. In this case, we have
\begin{eqnarray}
\left.\frac{\partial G_i}{\partial \mathbf{x}_{j,k}}\right\vert_{\tilde{\Xv},\tv_N}
&=&\sum\limits_{n=1}^N \left[ \eta(r_n ) \mathbf{x}_{n,i}\frac{\partial  w_n}{\partial \mathbf{x}_{j,k}}\right]-w_j\mathbf{x}_{j,i}\tv_{N}(k)\eta^{'}(r_j)+w_j\eta(r_j)\mathbb{I}(k=i),\no
\end{eqnarray}
and
$\left.\frac{\partial G_i}{\partial y_{j}}\right\vert_{\tilde{\Xv},\tv_N}=w_j\eta^{'}(r_j)\mathbf{x}_{j,i}.$ Furthermore,
$
\mathbf{G}^{'}_{\thetav}(\Xv,\tv_N)=-\Xv\text{diag}[c_1,\cdots,c_N]\Xv^T
$
with
$c_j=w_j\eta^{'}(r_j)$.
\end{exmpl}
\begin{exmpl}\textbf{Schewppe's approach~\citep{Merrill:TP:71}:} In Schewppe's approach, we have $v_n=1/w_n$. In this case, we have
\begin{eqnarray}
\left.\frac{\partial G_i}{\partial \mathbf{x}_{j,k}}\right\vert_{\tilde{\Xv},\tv_N}&=&\sum\limits_{n=1}^N \left[ \eta(r_n v_n) \mathbf{x}_{n,i}\frac{\partial  w_n}{\partial \mathbf{x}_{j,k}}-w_n \mathbf{x}_{n,i}\eta^{'}(r_n v_n)\left(\tv_{N}(k)v_n\mathbb{I}(j=n)+\frac{r_n}{w_n^2}\frac{\partial w_n }{\partial \mathbf{x}_{j,k}}\right)\right.\no\\
&&\left.+w_n\eta(r_n v_n)\mathbb{I}(j=n,k=i)\right]. \nonumber
\end{eqnarray}
and
$
\left.\frac{\partial G_i}{\partial y_{j}}\right\vert_{\tilde{\Xv},\tv_N}=\eta^{'}(r_jv_j)\mathbf{x}_{j,i}.
$ Furthermore, $c_j=\eta^{'}(r_jv_j)$ and
$
\mathbf{G}^{'}_{\thetav}(\Xv,\tv_N)=-\Xv\text{diag}[c_1,\cdots,c_N]\Xv^T.
$
\end{exmpl}
We will compare these methods numerically in Section~\ref{sec:simulation}.

\section{Optimal Adversarial Robustness vs Outlier Robustness Tradeoff}\label{sec:LSE}

In this section, we specialize the results to the joint estimation of location and scale. Building on these results, we will design $M$-estimators that minimizes AIF or achieves the optimal tradeoff between AIF, i.e., adversarial robustness, and IF, i.e., outlier robustness.

In the joint location-scale estimation, given $\{x_n,n=1,\cdots,N\}$ with $x_n\in \mathbb{R}$, 
the goal is to jointly estimate the location parameter $\theta_1$ and the scale parameter $\theta_2$. We will that assume $\theta_1$ is bounded and there is a constant $c$ such that $|\theta_2|>c$. Hence, the dimension of each data point $m=1$ and the dimension of parameter $q=2$. We focus on a large class of model named the location-scale model~\citep{Hampel:Book:86}. In the location-scale model, we have 
\begin{eqnarray}
F_{\thetav}(X)=\mathscr{L}(\theta_1+\theta_2Z),\no
\end{eqnarray}
in which $Z$ is a random variable with symmetric pdf $f_0$ and CDF $F_0$, and $\mathscr{L}(\theta_1+\theta_2Z)$ means the distribution of $\theta_1+\theta_2Z$. This class of model includes many important models in statistics. For example the joint estimation of mean and variance of Gaussian random variable belongs to this model. Another important example is the linear regression discussed in Section~\ref{sec:linearregression}. 

Following the convention, we use $T_N$ to denote the estimation of location $\theta_1$ and $S_N$ to denote the estimation of the scale $\theta_2$. In the location-scale model, one typically obtains $(T_N,S_N)$ by solving the following equations~\citep{Huber:Book:09}:
\begin{eqnarray}
\sum\limits_{n=1}^N\psi_1\left(\frac{x_n-T_N}{S_N}\right)&=&0,\no\\
\sum\limits_{n=1}^N\psi_2\left(\frac{x_n-T_N}{S_N}\right)&=&0,\no
\end{eqnarray}
with properly chosen $\psi_1$ and $\psi_2$. In the following, for presentation and notation convenience, we denote $$z_n:=\frac{x_n-T_N}{S_N}.$$
Hence, \begin{eqnarray}
\psiv(x,\thetav)=\left(\begin{array}{cc}
\psi_1\left(z\right)\\
\psi_2\left(z\right)
\end{array}\right), \no
\end{eqnarray} 
with $z=\frac{x-\theta_1}{\theta_2}$.

For this class of model, one typically focuses on equivariant $M$-estimator~\citep{Hampel:Book:86} with \begin{eqnarray}
\psi_1(-z)=-\psi_1(z) \text{ and } \psi_2(-z)=\psi_2(z).\no
\end{eqnarray}
This implies that the first component of $\psiv$ is an odd function, while the second component of $\psiv$ is an even function. Furthermore, $\psi_{1}$ and $\psi_{2}$ are assumed to be monotone functions in $z\geq 0$. Without loss of generality, we will focus on monotone increasing functions, hence $\psi_{1}^{'}(z)\geq 0$ and $\psi_{2}^{'}(z)\geq 0$ for $z\geq 0$. 

\subsection{Given Sample Case}
We first use the results derived in Section~\ref{sec:general} to characterize the AIF for a given data matrix $\Xv$. We note that in this joint location-scale estimation problem, $m=1$, $q=2$, and
\begin{eqnarray}
G_i=\sum\limits_{n=1}^N\psi_i(z_n).\no
\end{eqnarray}

We have
\begin{eqnarray}
\left.\frac{\partial G_i}{\partial x_n}\right\vert_{\Xv,\tv_N}&=&\frac{1}{S_N}\psi_i^{'}(z_n),\hspace{4mm}\left.\frac{\partial G_i}{\partial \theta_1}\right\vert_{\Xv,\tv_N}=-\frac{1}{S_N}\sum\limits_{n=1}^N\psi_{i}^{'}(z_n),\no\\
\left.\frac{\partial G_i}{\partial \theta_2}\right\vert_{\Xv,\tv_N}&=&-\frac{1}{S_N}\sum\limits_{n=1}^Nz_n\psi_{i}^{'}(z_n).\no
\end{eqnarray}

Hence,
\begin{eqnarray}
\mathbf{G}^{'}_{\Xv}(\Xv,\tv_N)&=&\frac{1}{S_N}\left[\begin{array}{ccc}\psi_{1}^{'}(z_1),&,\cdots,&\psi_{1}^{'}(z_N)\\
\psi_{2}^{'}(z_1),&\cdots,&\psi_{2}^{'}(z_N)
\end{array}\right],\no\\
\mathbf{G}^{'}_{\thetav}(\Xv,\tv_N)&=&-\frac{1}{S_N}\left[\begin{array}{cc}\sum\limits_{n=1}^N\psi_{1}^{'}(z_n),&\sum\limits_{n=1}^Nz_n\psi_{1}^{'}(z_n)\\
\sum\limits_{n=1}^N\psi_{2}^{'}(z_n),&\sum\limits_{n=1}^Nz_n\psi_{2}^{'}(z_n)
\end{array}\right]:=-\frac{1}{S_N}\left[\begin{array}{cc}a,&b\\
c,&d
\end{array}\right].\label{eq:abcd}
\end{eqnarray}
Since, $q=2$, $\Sigmav=\{(1,1)^T,(-1,-1)^T,(1,-1)^T,(-1,1)^T\}$. Due to symmetry, we only need to consider $\sigmav$ being either $(1,1)^T$ or $(1,-1)^T$. When $\sigmav\in\{(1,1)^T,(1,-1)^T\}$, we have
\begin{eqnarray}
&&\hspace{-8mm}\sigmav^T\left[\mathbf{G}^{'}_{\thetav}(\Xv,\tv_N)\right]^{-1}\mathbf{G}^{'}_{\Xv}(\Xv,\tv_N)\no\\
&&\hspace{-8mm}=-\frac{\left[(d-\sigmav(2)c)\psi_{1}^{'}(z_1)+(\sigmav(2)a-b)\psi_{2}^{'}(z_1),\cdots,(d-\sigmav(2)c)\psi_{1}^{'}(z_N)+(\sigmav(2)a-b)\psi_{2}^{'}(z_N)\right]}{ad-bc}.\nonumber
\end{eqnarray}


Hence, using Theorem~\ref{thm:AIFgen}, for $p>1$, we have
\begin{eqnarray}
&&\hspace{-6mm}\AIF(\psiv,\Xv,p)\no\\
&=&\max\limits_{\sigmav}N^{1/p}\left|\left|\sigmav^T\left[\mathbf{G}^{'}_{\thetav}(\Xv,\tv_N)\right]^{-1}\mathbf{G}^{'}_{\Xv}(\Xv,\tv_N)\right|\right|_{\frac{p}{p-1}}\nonumber\\
&=&\frac{N^{1/p}}{|ad-bc|}\max\limits_{\sigmav(2)\in\{1,-1\}}\left[\sum\limits_{n=1}^{N}\left|(d-\sigmav(2)c)\psi_1^{'}(z_n)+(\sigmav(2)a-b)\psi_2^{'}(z_n)\right|^{p/(p-1)}\right]^{(p-1)/p}\nonumber\\
&=&\frac{N}{|ad-bc|}\max\limits_{\sigmav(2)\in\{1,-1\}}\left[\frac{1}{N}\sum\limits_{n=1}^{N}\left|(d-\sigmav(2)c)\psi_1^{'}(z_n)+(\sigmav(2)a-b)\psi_2^{'}(z_n)\right|^{p/(p-1)}\right]^{(p-1)/p}\nonumber\\
&\geq&\frac{N}{|ad-bc|}\max\limits_{\sigmav(2)\in\{1,-1\}}\frac{1}{N}\sum\limits_{n=1}^{N}|(d-\sigmav(2)c)\psi_1^{'}(z_n)+(\sigmav(2)a-b)\psi_2^{'}(z_n)|\nonumber\\
&{\geq}&\frac{1}{|ad-bc|}\max\limits_{\sigmav(2)\in\{1,-1\}}\left|\sum\limits_{n=1}^{N}\left((d-\sigmav(2)c)\psi_1^{'}(z_n)+(\sigmav(2)a-b)\psi_2^{'}(z_n)\right)\right|\nonumber\\
&{=}&1.\no
\end{eqnarray}

Summarizing the discussion, we have the following proposition.
\begin{prop}\label{prop:AIFgen}
	For joint location-scale estimation, 
	\begin{eqnarray}
	\AIF(\psiv,\Xv,p)=\frac{N^{1/p}}{|ad-bc|}\max\limits_{\sigmav(2)\in\{1,-1\}}\left[\sum\limits_{n=1}^{N}\left|(d-\sigmav(2)c)\psi_1^{'}(z_n)+(\sigmav(2)a-b)\psi_2^{'}(z_n)\right|^{p/(p-1)}\right]^{(p-1)/p}.\no
	\end{eqnarray}
	
	Furthermore, we have
	\begin{eqnarray}
	\AIF(\psiv,\Xv,p)\geq 1.\no
	\end{eqnarray}
\end{prop}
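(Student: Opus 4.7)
The plan is to specialize Theorem~\ref{thm:AIFgen} to $m=1$, $q=2$ and turn the general matrix formula into the stated scalar expression, then extract the lower bound by a two-step inequality chain.

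First I would compute $\mathbf{G}^{'}_{\Xv}$ and $\mathbf{G}^{'}_{\thetav}$ directly from the chain rule applied to $\psi_i(z_n)$ with $z_n = (x_n - T_N)/S_N$. This gives the $2\times N$ matrix $\mathbf{G}^{'}_{\Xv}$ and the $2\times 2$ matrix $\mathbf{G}^{'}_{\thetav}$ displayed in~\eqref{eq:abcd}, introducing the scalar shorthands $a,b,c,d$ for the column sums weighted by $1$ or $z_n$. Inverting the $2\times 2$ matrix explicitly (the factor $1/S_N$ drops out since it appears in both $\mathbf{G}^{'}_{\thetav}$ and $\mathbf{G}^{'}_{\Xv}$), the product $\sigmav^T[\mathbf{G}^{'}_{\thetav}]^{-1}\mathbf{G}^{'}_{\Xv}$ becomes a row vector whose $n$th entry is
\begin{eqnarray}
-\frac{(d-\sigmav(2)c)\psi_1^{'}(z_n)+(\sigmav(2)a-b)\psi_2^{'}(z_n)}{ad-bc}.\no
\end{eqnarray}
Because the expression depends on $\sigmav$ only through $\sigmav(2)\in\{-1,+1\}$ (after factoring out an overall sign), the outer $\max_{\sigmav\in\Sigmav}$ reduces to a max over $\sigmav(2)\in\{-1,1\}$, yielding the displayed formula for $\AIF(\psiv,\Xv,p)$ with $(mN)^{1/p}=N^{1/p}$.

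For the lower bound $\AIF(\psiv,\Xv,p)\geq 1$, I would chain together two standard inequalities followed by an algebraic identity. Writing $u_n := (d-\sigmav(2)c)\psi_1^{'}(z_n)+(\sigmav(2)a-b)\psi_2^{'}(z_n)$, the power-mean (Jensen) inequality with exponent $p/(p-1)\geq 1$ gives
\begin{eqnarray}
\left[\frac{1}{N}\sum_{n=1}^{N}|u_n|^{p/(p-1)}\right]^{(p-1)/p}\;\geq\;\frac{1}{N}\sum_{n=1}^{N}|u_n|,\no
\end{eqnarray}
and the triangle inequality gives $\sum_n|u_n|\geq\bigl|\sum_n u_n\bigr|$. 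Finally, using $\sum_n\psi_1^{'}(z_n)=a$ and $\sum_n\psi_2^{'}(z_n)=c$, the sum $\sum_n u_n$ telescopes to $(d-\sigmav(2)c)a+(\sigmav(2)a-b)c = ad-bc$, so the whole expression is bounded below by $N\cdot|ad-bc|/(N|ad-bc|)=1$.

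The only substantive obstacle is the bookkeeping on the $2\times 2$ inversion and the observation that the sign pattern in $\sigmav$ collapses to a single binary choice; once the entries of the row vector are in closed form, the lower bound is a direct consequence of the power-mean and triangle inequalities together with the defining identities for $a$ and $c$.
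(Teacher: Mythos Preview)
Your proposal is correct and follows essentially the same route as the paper: specialize Theorem~\ref{thm:AIFgen} with the explicit $2\times2$ inversion (the $1/S_N$ cancellation and the reduction of $\max_{\sigmav\in\Sigmav}$ to $\sigmav(2)\in\{-1,1\}$ are handled exactly as in the text), and then chain the power-mean inequality, the triangle inequality, and the identity $\sum_n u_n=(d-\sigmav(2)c)a+(\sigmav(2)a-b)c=ad-bc$ to obtain the bound $\AIF\geq 1$.
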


In the following, we provide several examples to analyze AIF of existing schemes.
\begin{exmpl}\label{ex:meanstandard}
	The coupled mean and sample standard deviation estimator (page 233 of~\citep{Hampel:Book:86}) is specified by
	$\psi_1(z)=z$ and $\psi_2(z)=z^2-1$. For this estimator, we have $\psi_1^{'}(z_n)=1$, $\psi_2^{'}(z_n)=2z_n$, and
	\begin{eqnarray}
	\sum\limits_{n=1}^Nz_n=0, \text{  and  }\sum\limits_{n=1}^N(z_n^2-1)=0.\no
	\end{eqnarray}
	As the result, 
	\begin{eqnarray}
	a&=&\sum\limits_{n=1}^N\psi_1^{'}(z_n)=N,\nonumber\\
	b&=&\sum\limits_{n=1}^Nz_n\psi_1^{'}(z_n)=\sum\limits_{n=1}^Nz_n=0,\nonumber\\
	c&=&\sum\limits_{n=1}^N\psi_2^{'}(z_n)=\sum\limits_{n=1}^N2z_n=0,\nonumber\\
	d&=&\sum\limits_{n=1}^Nz_n\psi_2^{'}(z_n)=\sum\limits_{n=1}^N2z_n^2=2N.\no
	\end{eqnarray}
	Hence, 
	\begin{eqnarray}
	\AIF(\psiv,\Xv,p)&=&\frac{N^{1/p}}{|ad-bc|}\max\limits_{\sigmav(2)\in\{1,-1\}}\left[\sum\limits_{n=1}^{N}\left|(d-\sigmav(2)c)\psi_1^{'}(z_n)+(\sigmav(2)a-b)\psi_2^{'}(z_n)\right|^{p/(p-1)}\right]^{(p-1)/p}\nonumber\\
	&=&\frac{N^{1/p}}{2N^2}\max\limits_{\sigmav(2)\in\{1,-1\}}\left[\sum\limits_{n=1}^{N}\left|2N+2N\sigmav(2)z_n\right|^{p/(p-1)}\right]^{(p-1)/p}\nonumber\\
	&=&\max\limits_{\sigmav(2)\in\{1,-1\}}\left[\frac{1}{N}\sum\limits_{n=1}^{N}\left|1+\sigmav(2)z_n\right|^{p/(p-1)}\right]^{(p-1)/p}.\no
	\end{eqnarray}
	In particular, when $p=2$, we have
	\begin{eqnarray}
	\AIF(\psiv,\Xv,2)&=&\max\limits_{\sigmav(2)\in\{1,-1\}}\left[\frac{1}{N}\sum\limits_{n=1}^{N}(1+\sigmav(2)z_n)^{2}\right]^{1/2}=\sqrt{2}.\no
	\end{eqnarray}
\end{exmpl}

\begin{exmpl}\label{ex:huber}
	Combination of the asymptotic minimax estimates of location and scale:
	\begin{eqnarray}
	\psi_1(z)=\max[-K,\min(K,z)],\text{  and  }
	\psi_2(z)=\min(\alpha^2,z^2)-\beta,\no
	\end{eqnarray}
	where $0<\beta<\alpha^2$. The parameters $K, \alpha$ and $\beta$ are properly chosen so that the corresponding location and scale estimators are minimax estimators respectively. If one further sets $\alpha=K$, this estimator corresponds to Huber's Proposal 2~\citep{Huber:AMS:64}.	For this joint estimator, we have
	\begin{eqnarray}
	\sum\limits_{n=1}^N\psi_1(z_n)&=&\sum\limits_{n\in \mathcal{A}_{-}:=\{z_n\leq -K\}}-K+\sum\limits_{n\in \mathcal{A}:=\{|z_n|<K\}}z_n+\sum\limits_{n\in \mathcal{A}_{+}:=\{z_n\geq K\}}K=0,\no\\
	\sum\limits_{n=1}^N\psi_2(z_n)&=&\sum\limits_{n\in \mathcal{B}:=\{z_n^2\leq c^2\}}(z_n^2-\beta)+\sum\limits_{n\in\bar{\mathcal{B}}}(\alpha^2-\beta)=0,\no
	\end{eqnarray}
	$
	\psi_1^{'}(z_n)=\mathbb{I}(n\in\mathcal{A})\text{ and }
	\psi_2^{'}(z_n)=2z_n\mathbb{I}(n\in\mathcal{B}).
	$
	
	As the result,
	\begin{eqnarray}
	a&=&\sum\limits_{n=1}^N\psi_1^{'}(z_n)=|\mathcal{A}|,\nonumber\\
	b&=&\sum\limits_{n=1}^Nz_n\psi_1^{'}(z_n)=\sum\limits_{n\in \mathcal{A}}z_n=K[|\mathcal{A}_{-}|-|\mathcal{A}_{+}|],\nonumber\\
	c&=&\sum\limits_{n=1}^N\psi_2^{'}(z_n)=2\sum\limits_{n\in\mathcal{B}}z_n,\nonumber\\
	d&=&\sum\limits_{n=1}^Nz_n\psi_2^{'}(z_n)=\sum\limits_{n\in\mathcal{B}}2z_n^2=2(N\beta-|\bar{\mathcal{B}}|c^2).\no
	\end{eqnarray}
	
	Plugging these into Proposition~\ref{prop:AIFgen}, we can obtain the formulator for AIF.
	
\end{exmpl}

In Section~\ref{sec:pop}, we will see that these existing schemes are not optimal in terms of minimizing AIF with or without additional IF constraints. We will derive optimal schemes that minimizes AIF with or without IF constraint in Section~\ref{sec:pop}.

\subsection{Population Case}\label{sec:pop}

In this section, we analyze the behavior of AIF as $N\rightarrow \infty$. Based on this analysis, we will characterize the optimal $\psiv$ that minimizes AIF. We will further identify a tradeoff between AIF and IF, and design $\psiv$ that achieves this optimal tradeoff.

Using Proposition~\ref{prop:AIFgen}, we have
\begin{eqnarray}
&&\hspace{-8mm}\AIF(\psiv,\Xv,2)\nonumber\\
&\hspace{-3mm}=&\hspace{-2mm}\frac{N}{|ad-bc|}\max\limits_{\sigmav(2)\in\{1,-1\}}\left[\frac{1}{N}\sum\limits_{n=1}^{N}\left|(d-\sigmav(2)c)\psi_1^{'}(z_n)+(\sigmav(2)a-b)\psi_2^{'}(z_n)\right|^{2}\right]^{1/2}\nonumber\\
&\hspace{-3mm}=&\hspace{-2mm}\frac{1}{|\tilde{a}\tilde{d}-\tilde{b}\tilde{c}|}\max\limits_{\sigmav(2)\in\{1,-1\}}\left[\frac{\sum\limits_{n=1}^{N}\left((\tilde{d}-\tilde{c})^2\psi_1^{'}(z_n)^2+2(\tilde{a}-\tilde{b})((\tilde{d}-\tilde{c}))\psi_1^{'}(z_n)\psi_2^{'}(z_n)+(\tilde{a}-\tilde{b})^2\psi_2^{'}(z_n)^2\right)}{N}\right]^{1/2}\label{eq:AIFsigma},
\end{eqnarray}
in which $\tilde{a}=\sigmav(2)a/N,\tilde{b}=b/N,\tilde{c}=\sigmav(2)c/N,\tilde{d}=d/N$. 

It has been shown in Theorem 2.4 of~\citep{Huber:Book:09} that, under certain mild regularity conditions, $T_N\overset{a.s.}{\rightarrow}\theta_1$ and $S_N\overset{a.s.}{\rightarrow}\theta_2$. In the following, we will need the following additional regularity conditions:
\begin{itemize}
	\item $\psi_1^{'}(z)$ and $\psi_2^{'}(z)$ are continuous functions. 
	\item There exist a function $K(z)$ such that $|\psi_1^{'}(z)|\leq K(z)$, $|z\psi_1^{'}(z)|\leq K(z)$, $|\psi_2^{'}(z)|\leq K(z)$, $|z\psi_2^{'}(z)|\leq K(z)$, and $\mathbb{E}[K(Z)]<\infty$.
\end{itemize} 
Recall that the CDF of $Z$ is $F_0$, hence $\mathbb{E}\{\cdot\}$ here (and the subsequent discussion) denotes expectation under $F_0$. The conditions here are slightly stronger than those conditions needed for the strong law of large numbers, as we will need to use the uniform strong law of large numbers (see Theorem 16 (a)~\citep{Ferguson:Book:96}). Under these regularity assumptions, using the uniform strong law of large numbers, Slutsky Theorem (see Chapter 6 of~\citep{Ferguson:Book:96}) and the fact that $T_N\overset{a.s.}{\rightarrow}\theta_1$ and $S_N\overset{a.s.}{\rightarrow}\theta_2$, we have 
\begin{eqnarray}
\tilde{a}&=&\sigmav(2)a/N\overset{a.s.}{\rightarrow} \sigmav(2)\mathbb{E}\{\psi_1^{'}(Z)\},\no\\
\tilde{d}&=&d/N\overset{a.s.}{\rightarrow} \mathbb{E}\{Z\psi_2^{'}(Z)\}.\no
\end{eqnarray}
Furthermore, 
\begin{eqnarray}
\tilde{b}=b/N\overset{a.s.}{\rightarrow} \mathbb{E}\{Z\psi_1^{'}(Z)\}=0,\no
\end{eqnarray}
as $\psi_1^{'}(z)$ is an even function (since $\psi_1(z)$ is an odd function) and $f_0$ is symmetric. Similarly, $\tilde{c}=\sigmav(2)c/N\overset{a.s.}{\rightarrow} \mathbb{E}\{\psi_2^{'}(Z)\}=0$ since $\psi_2(Z)$ is an even function.

As a result, 
\begin{eqnarray}
\AIF(\psiv,\Xv,2)\overset{a.s.}{\rightarrow} \sqrt{\frac{\mathbb{E}\{\psi_1^{'}(Z)^2\}}{(\mathbb{E}\{\psi_1^{'}(Z)\})^2}+\frac{\mathbb{E}\{\psi_2^{'}(Z)^2\}}{(\mathbb{E}\{Z\psi_2^{'}(Z)\})^2}}:=\AIF(\psiv,F_{0},2).\label{eq:AIFls}
\end{eqnarray}
In deriving this equation, we use the fact that $\psi_1^{'}(Z)\psi_2^{'}(Z)$ is an odd function, hence $$\frac{1}{N}\sum\limits_{n=1}^N\psi_1^{'}(z_n)\psi_2^{'}(z_n)\overset{a.s.}{\rightarrow} \mathbb{E}\{\psi_1^{'}(Z)\psi_2^{'}(Z)\}=  0.$$

\subsubsection{\textbf{Minimizing $\AIF(\psiv,F_{0},2)$ }}\label{sec:AIFnocon}

From the defender's perspective, one would like to design $\psiv$ such that $\AIF(\psiv,F_{0},2)$ is small. We now characterize $\psi_1$ and $\psi_2$ that minimize $\AIF(\psiv,F_{0},2)$. As~\eqref{eq:AIFls} can be decomposed into two independent terms, we can minimize $\psi_1(z)$ and $\psi_2(z)$ separately. Since $\psi_1(z)$ is an odd function and $\psi_2(z)$ is an even function, we only need to characterize the functions for $z\geq 0$.

To obtain the optimal $\psi_1(z)$, due to Jensen's inequality, we have that
\begin{eqnarray}
\frac{\mathbb{E}\{\psi_1^{'}(Z)^2\}}{(\mathbb{E}\{\psi_1^{'}(Z)\})^2}\geq 1\no
\end{eqnarray}
for which the equality holds when $\psi_1^{'}(z)$ is a constant. By setting $\psi_1(0)=0$, this function satisfies the requirements on $\psi_1(z)$ (i.e., $\psi_1(z)$ is Fisher consistent and is an odd function).

For $\psi_2(z)$, we need to solve
\begin{eqnarray}
\min && \frac{\mathbb{E}\{\psi_2^{'}(Z)^2\}}{(\mathbb{E}\{Z\psi_2^{'}(Z)\})^2}=\frac{\int_{0}^{\infty}\psi_2^{'}(z)^2f_0(z)\di z}{(\int_{0}^{\infty}z\psi_2^{'}(z)f_0(z)\di z)^2}, \label{eq:AIFscale} \\
\text{s.t.}&& \mathbb{E}\{\psi_2\}=2\int_{0}^{\infty}\psi_2(z)f_0(z)\di z=0,\label{eq:Fisherno2}\\
&& \psi_2^{'}(z)\geq 0.\no
\end{eqnarray}
in which we use the fact that $\psi_2^{'}(z)$ is an odd function and $f_0(z)$ is a symmetric function, while we use the requirement that $\psi_2(z)$ is an even function. Here,~\eqref{eq:Fisherno2} is the Fisher consistency requirement.
\begin{prop}\label{prop:aif}
	The optimal $\psi_2(z)$ is given by the following form: 
	\begin{itemize}
		\item $\psi_2^{'}(z)=\frac{z}{\int_{0}^{\infty}z^2f_0(z)\text{d}z}$.
		\item $\psi_2(0)=-\int_{0}^{\infty}\left(\int_{0}^{z}\psi_2^{'}(t)\di t\right)f_0(z)\di z$.
	\end{itemize}
\end{prop}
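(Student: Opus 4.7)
The plan is to exploit the fact that the objective and the monotonicity constraint depend only on $\psi_2'$, while the Fisher consistency requirement \eqref{eq:Fisherno2} pins $\psi_2$ down only through its value at the origin. I would therefore first solve for the optimal derivative $\psi_2'$ by a Cauchy--Schwarz argument, and then choose $\psi_2(0)$ to enforce Fisher consistency.

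For the first step, applying Cauchy--Schwarz in $L^2(f_0(z)\,\di z)$ on $[0,\infty)$ to the pair of functions $\psi_2'(z)$ and $z$ gives
\begin{eqnarray}
\left(\int_0^\infty z\,\psi_2'(z)\,f_0(z)\,\di z\right)^2 \leq \int_0^\infty \psi_2'(z)^2\, f_0(z)\,\di z \,\cdot\, \int_0^\infty z^2 f_0(z)\,\di z,\no
\end{eqnarray}
so the ratio in \eqref{eq:AIFscale} is bounded below by $1/\int_0^\infty z^2 f_0(z)\,\di z$, with equality if and only if $\psi_2'(z)$ is proportional to $z$ on $[0,\infty)$. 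Since the objective is scale-invariant in $\psi_2'$, any positive multiple of $z$ attains the minimum; the normalization $\psi_2'(z) = z/\int_0^\infty z^2 f_0(z)\,\di z$ stated in the proposition is simply a convenient choice. The monotonicity constraint $\psi_2'(z)\geq 0$ on $z\geq 0$ is then automatic (and forces the proportionality constant to be non-negative).

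For the second step, the evenness requirement together with continuity forces $\psi_2(z) = \psi_2(0) + \int_0^{|z|}\psi_2'(t)\,\di t$ once $\psi_2'$ has been fixed on the positive half-line. Substituting this expression into \eqref{eq:Fisherno2} gives an affine equation in the single unknown $\psi_2(0)$, whose unique solution is precisely the double integral displayed in the proposition (up to the convention on how the weight $\int_0^\infty f_0(z)\,\di z$ is absorbed).

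The argument reduces essentially to one application of Cauchy--Schwarz once the decoupling between $\psi_2'$ and $\psi_2(0)$ is recognized, so I do not expect any serious obstacle. The only mild technicalities are that $\mathbb{E}\{Z^2\}<\infty$ under $F_0$ (needed so that the lower bound is finite and attained), and that the extremizer lies in the admissible class --- but $\psi_2'(z)\propto z$ is smooth, odd, and non-negative on $[0,\infty)$, making $\psi_2$ smooth, even, monotone on $[0,\infty)$, and Fisher-consistent by the very definition of $\psi_2(0)$.
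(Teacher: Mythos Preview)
Your proposal is correct and reaches the same conclusion as the paper, but the route differs. The paper's proof in Appendix~\ref{app:aif} also decouples $\psi_2(0)$ from $\psi_2'$ first, then exploits scale invariance to fix $\int_0^\infty z\psi_2'(z)f_0(z)\,\di z = 1$; however, instead of Cauchy--Schwarz it forms a Lagrangian for the resulting constrained calculus-of-variations problem and reads off $\psi_2'(z)=\nu^* z$ from the stationarity condition together with the complementary-slackness condition $\lambda^*(z)g(z)=0$. Your Cauchy--Schwarz argument is more elementary and gives the lower bound and the equality case in one stroke, so for this unconstrained proposition it is arguably cleaner. What the paper's Lagrangian approach buys is that the same machinery carries over verbatim to the harder problem in Theorem~\ref{prop:aifcon}, where additional inequality constraints on $\psi_2(\infty)$ and $\psi_2(0)$ enter and a pure Cauchy--Schwarz argument no longer suffices.
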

\begin{proof}
	Please see Appendix~\ref{app:aif}.
\end{proof}

\subsubsection{\textbf{Optimal $\AIF$ vs $\text{IF}$ Tradeoff}}\label{sec:AIFcon}

In Section~\ref{sec:AIFnocon}, we characterize the optimal $\psi_1$ and $\psi_2$ that minimize $\AIF(\psiv,F_{0},2)$. As we will see in the following, this choice will lead to an unbounded IF function. Hence, in this section, we aim to minimize $\AIF(\psiv,F_{0},2)$ while putting an upperbound on $\text{IF}$.

In particular, for the location-scale model, the influence function can be written as (see page 233 of~\citep{Hampel:Book:86} and using the Fisher consistent constraint)
\begin{eqnarray}
\text{IF}(z;T,F_0)=\left(\begin{array}{cc}
B_1^{-1}\psi_1(z)\\
B_2^{-1}\psi_2(z)
\end{array}\right),\no
\end{eqnarray}
in which
$
B_1=\mathbb{E}\{\psi_1^{'}(Z)\}\text{ and }
B_2=\mathbb{E}\{Z\psi_2^{'}(Z)\}.
$

Hence, the unstandardized gross-error sensitivity defined in~\eqref{eq:gross} for the location-scale model is given by
\begin{eqnarray}
\gamma_u^*=\sup\limits_{z}\sqrt{B_1^{-2}\psi_1^2(z)+B_2^{-2}\psi_2^2(z)}.
\end{eqnarray} 
It is easy to check that, if one uses $\psi_1(z)$ and $\psi_2(z)$ characterized in Section~\ref{sec:AIFnocon}, $\gamma_u^*$ is unbounded.

In the following, we aim to characterize $\psi_1(z)$ and $\psi_2(z)$ that minimize $\AIF(\psiv,F_{0},2)$ while making sure that $\gamma_u^*$ is bounded. In particular, we have the following optimization problem
\begin{eqnarray}
\min &&	\AIF(\psiv,F_{0},2),\label{eq:AIFcon}\\
\text{s.t.}&& \sup\limits_{z} ||\text{IF}(z;T,F_0)||_2^2\leq \xi^2,\no
\end{eqnarray}
for any given positive constraint $\xi$. As mentioned at the beginning of the section, we also require $\psi_1$ to be an odd function with nonnegative gradient and to be Fisher consistent, and require $\psi_2$ to be an even function with nonnegative gradient when $z\geq 0$ and to be Fisher consistent. Again, we only need to focus on the case when $z\geq 0$.

Plugging the expression of AIF and IF, we have
\begin{eqnarray}
\min && \sqrt{\frac{\mathbb{E}\{\psi_1^{'}(Z)^2\}}{(\mathbb{E}\{\psi_1^{'}(Z)\})^2}+\frac{\mathbb{E}\{\psi_2^{'}(Z)^2\}}{(\mathbb{E}\{Z\psi_2^{'}(Z)\})^2}}\label{eq:AIFIF}\\
\text{s.t.}&& \sup\limits_{z}\left[ \frac{\psi_{1}^2(z)}{(\mathbb{E}\{\psi_1^{'}(Z)\})^2}+ \frac{\psi_{2}^2(z)}{(\mathbb{E}\{Z\psi_2^{'}(Z)\})^2}\right]\leq \xi^2. \label{eq:IFcons}
\end{eqnarray}

As both $\psi_{1}^{'}(z)$ and $\psi_2^{'}(z)$ are nonnegative, the sup in~\eqref{eq:IFcons} are achieved when $z$ is either $\infty$ or $0$. Hence, using the constraint that $\psi_1(z)$ is odd, the IF constraint can be written as
\begin{eqnarray}
\max\left\{\frac{\psi_{2}^2(0)}{\mathbb{E}\{Z\psi_2^{'}(Z)\}^2},\no \frac{\psi_{2}^2(\infty)}{(\mathbb{E}\{Z\psi_2^{'}(Z)\})^2}+\frac{\psi_{1}^2(\infty)}{(\mathbb{E}\{\psi_1^{'}(Z)\})^2}\right\}\leq \xi^2.\no
\end{eqnarray}

By setting $\xi_1^2+\xi_2^2=\xi^2$, we can first solve the following two problems.
\begin{eqnarray}
\text{P1}:\hspace{6mm}\min && \frac{\mathbb{E}\{\psi_1^{'}(Z)^2\}}{(\mathbb{E}\{\psi_1^{'}(Z)\})^2} \no\\
\text{s.t.} 
&& \psi_1^{'}(z)\geq 0,\no\\
&& \frac{\psi_{1}^2(\infty)}{(\mathbb{E}\{\psi_1^{'}(Z)\})^2}\leq \xi_1^2,\no
\end{eqnarray}
and
\begin{eqnarray}
\text{P2:}\hspace{6mm}\min && \frac{\mathbb{E}\{\psi_2^{'}(Z)^2\}}{(\mathbb{E}\{Z\psi_2^{'}(Z)\})^2} \no\\
\text{s.t.}&& \mathbb{E}\{\psi_2\}=0,\no\\
&& \psi_2^{'}(z)\geq 0,\no\\
&& \frac{\psi_{2}^2(\infty)}{(\mathbb{E}\{Z\psi_2^{'}(Z)\})^2}\leq \xi_2^2,\no\\
&&\frac{\psi_{2}^2(0)}{(\mathbb{E}\{Z\psi_2^{'}(Z)\})^2}\leq \xi^2.\no
\end{eqnarray}

After solving these problems for a given $\xi_1^2$ and $\xi_2^2$, we can then adjust the values $\xi_1^2$ and $\xi_2^2$ to obtain the overall solution to the optimization problem~\eqref{eq:AIFIF}. In P1, we do not write the Fisher consistent constraint, as an odd function $\psi_1(z)$ will automatically satisfy this constraint for symmetric $f_0(z)$.

\begin{thm}\label{prop:aifcon}
	The optimal odd function $\psi_{1}(z)$ is specified by
	\begin{eqnarray}
	\psi_1^{'}(z)=\left[\nu^*-\frac{ \vartheta_1^*}{ f_0(z)}\right]^+,\label{eq:psi1}
	\end{eqnarray}
	in which $\nu^*$ and $\vartheta_1^*$ are chosen to satisfy
	\begin{eqnarray}
	&&\int_{0}^{\infty}\psi_{1}^{'}(z)f_0(z)\text{d}z=1,\nonumber\\
	&&\vartheta^*_1\left( \int_{0}^{\infty}\psi_{1}^{'}(z)\text{d}z
	-2 \xi_1 \right )=0.\no
	\end{eqnarray}
	
	The optimal even function $\psi_2(z)$ have the following form:
	\begin{eqnarray}
	\psi_2^{'}(z)=\left[
	\nu^* z-\frac{ \vartheta_2^*+(\vartheta_1^*-\vartheta_2^*)F_0(z)}{ f_0(z)}\right]^+\label{eq:psi2}
	\end{eqnarray}
	and 
	$\psi_2(0)=-\int_{0}^{\infty}\psi_2^{'}(z)(1-F_0(z))\di z$, in which
	the parameters $\nu^*$, $\vartheta^*_1\geq 0$ and $\vartheta^*_2\geq 0$ satisfy
	\begin{eqnarray}
	&&\int_{0}^{\infty}z\psi_2^{'}(z)f_0(z)\text{d}z=1,\label{eq:xpsi2}\\
	&&\vartheta^*_1\left( \int_{0}^{\infty}\psi_2^{'}(z)F_0(z)\text{d}z
	- 2\xi_2 \right )=0,\label{eq:nu1}\\
	&& \vartheta^*_2\left(\int_{0}^{\infty}\psi_2^{'}(z)\left[1-F_0(z)\right]\text{d}z-2\xi\right)=0.\label{eq:vartheta2}
	\end{eqnarray}

\end{thm}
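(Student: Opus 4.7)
The plan is to exploit the fact that the objectives of P1 and P2 are homogeneous of degree zero in $\psi_1'$ and $\psi_2'$ respectively, so after a normalization each subproblem becomes a convex variational problem with a quadratic objective and linear constraints, solvable by pointwise Karush-Kuhn-Tucker (KKT) analysis.

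For P1, I first normalize by setting $\int_0^\infty \psi_1'(z) f_0(z)\di z = 1$; Fisher consistency $\mathbb{E}\{\psi_1(Z)\}=0$ is then automatic because $\psi_1$ is odd and $f_0$ symmetric. Using $\psi_1(\infty)=\int_0^\infty\psi_1'(z)\di z$ and $\mathbb{E}\{\psi_1'(Z)\}=2$, the IF-type constraint $\psi_1^2(\infty)\leq \xi_1^2(\mathbb{E}\{\psi_1'(Z)\})^2$ becomes the linear inequality $\int_0^\infty\psi_1'(z)\di z\leq 2\xi_1$. I then form the Lagrangian
\begin{eqnarray}
\mathcal{L}[\psi_1']=\int_0^\infty\psi_1'(z)^2 f_0(z)\di z - 2\nu\left(\int_0^\infty\psi_1' f_0\di z -1\right)+2\vartheta_1\left(\int_0^\infty\psi_1'\di z-2\xi_1\right),\no
\end{eqnarray}
with $\vartheta_1\geq 0$. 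Pointwise stationarity over the nonnegative cone $\psi_1'(z)\geq 0$ gives $2\psi_1'(z)f_0(z)=2\nu f_0(z)-2\vartheta_1$ whenever the right side is positive, and $\psi_1'(z)=0$ otherwise, which projects to the truncated form \eqref{eq:psi1}. Convexity of the program ensures this is a global minimizer, and the equations pinning down $\nu^*$ and $\vartheta_1^*$ are precisely the normalization together with complementary slackness for the inequality constraint.

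For P2 the same strategy works after one extra step: the Fisher consistency $\mathbb{E}\{\psi_2(Z)\}=0$ must be used to express $\psi_2(0)$ in terms of $\psi_2'$. Integrating $\int_0^\infty\psi_2(z) f_0(z)\di z=0$ by parts (with antiderivative $-(1-F_0)$) yields the second display of the theorem, $\psi_2(0)=-\int_0^\infty\psi_2'(z)(1-F_0(z))\di z$, under the paper's normalization convention. Combined with $\psi_2(\infty)=\psi_2(0)+\int_0^\infty\psi_2'(z)\di z = \int_0^\infty\psi_2'(z)F_0(z)\di z$, the two IF constraints on $\psi_2(0)$ and $\psi_2(\infty)$ translate into linear integrals of $\psi_2'$. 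Under the normalization $\int_0^\infty z\psi_2'(z) f_0(z)\di z=1$, the resulting Lagrangian is
\begin{eqnarray}
\mathcal{L}[\psi_2']&=&\int_0^\infty\psi_2'(z)^2 f_0(z)\di z-2\nu\left(\int_0^\infty z\psi_2' f_0\di z-1\right)\no\\
&&+2\vartheta_1\left(\int_0^\infty\psi_2' F_0\di z-2\xi_2\right)+2\vartheta_2\left(\int_0^\infty\psi_2'(1-F_0)\di z-2\xi\right),\no
\end{eqnarray}
whose pointwise stationarity $2\psi_2'(z)f_0(z)=2\nu z f_0(z)-2\vartheta_1 F_0(z)-2\vartheta_2(1-F_0(z))$, combined with $\psi_2'(z)\geq 0$ and grouping the constants as $\vartheta_2+(\vartheta_1-\vartheta_2)F_0$, produces the truncated form \eqref{eq:psi2}. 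Equations \eqref{eq:xpsi2}--\eqref{eq:vartheta2} are then the normalization and the KKT complementary slackness conditions for the two inequality constraints.

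The main technical obstacle is the integration-by-parts step in P2: one needs the boundary term $\lim_{z\to\infty}\psi_2(z)(1-F_0(z))$ to vanish, which is guaranteed by the IF bound forcing $\psi_2(\infty)$ finite together with $1-F_0(z)\to 0$. Beyond that, each subproblem is a convex program in $\psi_1'$ or $\psi_2'$, so the KKT conditions are both necessary and sufficient, and the rest of the verification is bookkeeping to match the Lagrange multipliers with the equations stated in the theorem.
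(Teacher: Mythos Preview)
Your proposal is correct and follows essentially the same route as the paper: exploit scale invariance to normalize the denominator, convert the IF constraints into linear integral inequalities on $\psi_1'$ and $\psi_2'$ (using the Fisher-consistency identity for $\psi_2(0)$ via integration by parts), and then solve the resulting convex variational problems by pointwise KKT analysis. The only cosmetic difference is that the paper introduces an explicit multiplier $\lambda(z)$ for the nonnegativity constraint and derives the truncation from complementary slackness, whereas you project directly onto the cone $\psi_1',\psi_2'\geq 0$; these are equivalent, and your added remark about the boundary term in the integration by parts is a point the paper leaves implicit.
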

\begin{proof}
	Please see Appendix~\ref{app:aifcon}.	
\end{proof}

\section{Numerical Examples}\label{sec:simulation}
In this section, we provide numerical examples to illustrate the results obtained in this paper.
\subsection{Joint Location-Scale Estimation}
In this example, we consider the adversarially robust joint location-scale estimation for Laplace random variables. In this case, $f_0$ has the following form
\begin{eqnarray}
f_0(z)=\frac{1}{2}\exp(-|z|).\no
\end{eqnarray}

From Proposition~\ref{prop:aif}, we know that the optimal $\psiv$ that minimizes $\text{AIF}(\psiv,F_0,2)$ has the following form:
\begin{enumerate}
	\item $\psi_1(0)=0$, $\psi_1^{'}(z)=c$ with $c$ being a constant.
	\item $\psi_2(0)=-1$ and $\psi_2^{'}(z)=2z$.
\end{enumerate}

In the following, using Theorem~\ref{prop:aifcon}, we characterize the optimal $\psiv$ that minimizes $\text{AIF}(\psiv,F_0,2)$ subject to the constraint that the $\ell_2$ norm of $\text{IF}(x;T,F_0)$ is upper-bounded by $\xi$. We will only state the form of the functions for $z\geq 0$ as $\psi_1(z)$ is an odd function and $\psi_2(z)$ is an even function.

First, from~\eqref{eq:psi1}, we know that the optimal form $\psi_1^{'}(z)$ has the following form: 
\begin{eqnarray}
\psi_1^{'}(z)=\nu^*-2\vartheta_1^*\exp(z),\hspace{3mm} 0\leq z\leq a_1,\no
\end{eqnarray}
in which $a_1$ is the solution of $$\frac{\nu^*}{2\vartheta_1^*}=\exp(z),$$ and $(\nu^*,\vartheta_1^*)$ are chosen to satisfy
\begin{eqnarray}
&&\int_{0}^{a_1}\left(\nu^*\exp(-z)-2\vartheta_1^*\right)\text{d}z=2,\no\\
&& \int_{0}^{a_1}\left(\nu^*-2\vartheta_1^*\exp(z)\right)\text{d}z
= 2\xi_1.\no
\end{eqnarray}

It is easy to check that these conditions can be simplified to 
\begin{eqnarray}
&&\nu^*-2(a_1+1)\vartheta_1^*=2,\no\\
&& (a_1-1)\nu^*+2\vartheta_1^*=2\xi_1.\no
\end{eqnarray}
Using these, we can express the values of $\nu^*$ and $\vartheta_1^*$ in terms of $a_1$
\begin{eqnarray}
\nu^*&=&\frac{2+2(a_1+1)\xi_1}{a_1^2},\label{eq:nu}\\
\vartheta_1^*&=&\frac{\xi_1-(a_1-1)}{a_1^2},\label{eq:var}
\end{eqnarray}
and the value of $a_1$ is determined by
\begin{eqnarray}
e^{a_1}=\frac{1+(a_1+1)\xi_1}{\xi_1+1-a_1}.\label{eq:a}
\end{eqnarray}
For a given $\xi_1$, the value of $a_1$ can be obtained by solving~\eqref{eq:a} numerically. We can then plug the value of $a$ into~\eqref{eq:nu} and~\eqref{eq:var} to obtain $\nu^*$ and $\vartheta_1^*$. 

Secondly, for a given $\xi_2$, we determine the form of $\psi_2$ in Theorem \ref{prop:aifcon}. For $f_0$ considered in Laplace random variables, we know that $1-F_0(z)\leq F_0(z)$ when $z\geq 0$. Using this fact in~\eqref{eq:nu1} and~\eqref{eq:vartheta2} along with the fact that $\xi_2\leq\xi$, we know that $\vartheta_2^*=0$ and $\vartheta_1^*\neq 0$. Hence from Theorem \ref{prop:aifcon}, we know that $\psi_2^{'}(z)$ has the following form
\begin{eqnarray}
\psi_2^{'}(z)=
\nu^* z-\vartheta_1^*(2\exp(z)-1), \hspace{3mm} 0\leq a_2\leq z\leq b,\no
\end{eqnarray}
in which both $a_2$ and $b$ satisfy \begin{eqnarray}
\frac{\nu^*}{\vartheta_1^*} =\frac{2\exp(z)-1}{z}.\label{eq:ab}
\end{eqnarray}
Furthermore $\nu^*$ and $\vartheta_1^*$ are determined by the following two equations (simplified from~\eqref{eq:nu1} and~\eqref{eq:vartheta2})
\begin{eqnarray}
\int_{a_2}^{b}\left(\nu^* z-\vartheta_1^*(2\exp(z)-1)\right)z\exp(-z)\di z=2,\no\\
\int_{a_2}^{b}\left(\nu^* z-\vartheta_1^*(2\exp(z)-1)\right)\left(2-\exp(-z)\right)\di z=4\xi_2.\no
\end{eqnarray}
After simple integral and using the fact that $a_2$, $b$ satisfy~\eqref{eq:ab}, we can simplify the above two equations to
\begin{eqnarray}
\frac{\nu^*}{\vartheta_1^*}=\frac{2(e^b-e^{a_2})-2(\xi_2+1)(b-a_2)+\xi_2(a_2^2-b^2+e^{-b_2}-e^{-a})}{0.5(b^2-a_2^2)+(2\xi_2+0.5)(e^{-b}-e^{-a_2})},\label{eq:nuv}\\
\vartheta_1^*\left[\frac{\nu^*}{\vartheta_1^*}2(e^{-b}-e^{-a_2})+b^{2}-a_2^{2}-e^{-b}+e^{-a_2}+2(b-a_2)\right]=-2.\label{eq:var1}
\end{eqnarray}

To find the value of $\nu^*$ and $\vartheta_1^*$, we can first obtain $\nu^*/\vartheta_1^*$ (and hence the values of $a_2$ and $b$) numerically from~\eqref{eq:ab} and~\eqref{eq:nuv}. After knowing $\nu^*/\vartheta_1^*$, we can then use~\eqref{eq:var1} to obtain $\vartheta_1^*$.

\begin{figure}[h]
	\begin{center}
		\includegraphics[width=12cm]{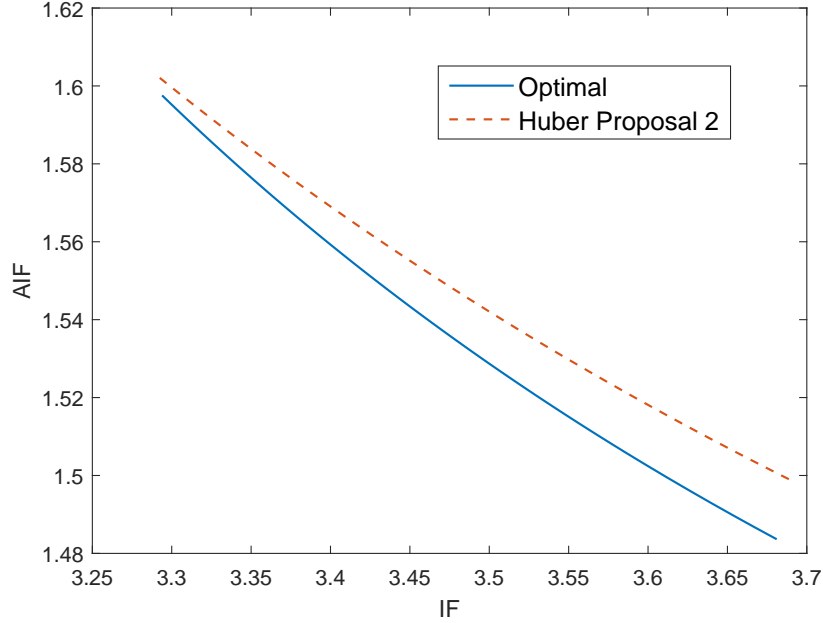}
		\caption{The optimal tradeoff between AIF and IF for joint location-scale estimation in Laplace random variables}
		\label{fig:scalelocationtradeoff}
	\end{center}
\end{figure}

Fig~\ref{fig:scalelocationtradeoff} illustrates the tradeoff between AIF and IF of the optimal $M$-estimator characterized using the approach outlined above. In the figure, we also plot the curve for Huber proposal 2 discussed in Example~\ref{ex:huber}. From the figure, we can see that as IF increases (less robust to outliers), AIF decreases (more robust to adversary modifications) and vice verse. Furthermore, there is a gap between the tradeoff achieved by the Huber proposal 2 and the optimal tradeoff achieved by the estimator characterized above. The tradeoff achieved by Huber proposal 2 in turn is better than that of the coupled mean and standard deviation estimator discussed in Example~\ref{ex:meanstandard}, for which achieves $\text{AIF}=\sqrt{2}$ but $\text{IF}=\infty$.

\subsection{Robust Linear Regression}

In this section, we compare AIF of various robust regression methods discussed in Section~\ref{sec:linearregression}. In the following, we will adopt the commonly used coefficients $w_n=\sqrt{1-h_{nn}}$, $v_n=1/w_n$ and use Huber function for $\eta$, that is \begin{eqnarray}\eta(x)=\max[-K,\min(K,x)].\label{eq:Huber}\end{eqnarray} To proceed further, we need to compute 
\begin{eqnarray}
\frac{\partial w_n}{\partial \xv_{j,k}}&=&\frac{-1}{2\sqrt{1-h_{nn}}}\frac{\partial h_{nn}}{\partial \xv_{j,k}},\label{eq:wpartial}\\ \frac{\partial v_n}{\partial \xv_{j,k}}&=&\frac{1}{2(1-h_{nn})^{3/2}}\frac{\partial h_{nn}}{\partial \xv_{j,k}},\label{eq:vpartial}
\end{eqnarray} 
both of which depend on $\partial h_{nn}/\partial \xv_{j,k}$. Recall that 
$h_{nn}=\xv_n^T(\Xv\Xv^T)^{-1}\xv_n$, which is a complicated function of $\xv_{j,k}$. To address this, we use $\Xv_{(-j)}$ to denote the data matrix $\Xv$ but with the $j$th column removed. To simplify notation, we let $\Av=\Xv\Xv^T$ and $\Av_{(-j)}=\Xv_{(-j)}\Xv_{(-j)}^T$. We have $\Av=\Xv\Xv^T=\Xv_{(-j)}\Xv_{(-j)}^T+\xv_j\xv_j^T=\Av_{(-j)}+\xv_j\xv_j^T$. Using Sherman-Morrison formula, we have
\begin{eqnarray}
\Av^{-1}=(\Av_{(-j)}+\xv_j\xv_j^T)^{-1}=\Av_{(-j)}^{-1}-\frac{\Av_{(-j)}^{-1}\xv_j\xv_j^T\Av_{(-j)}^{-1}}{1+\xv_j^T\Av_{(-j)}^{-1}\xv_j}.\no
\end{eqnarray}
Hence,
\begin{eqnarray}
h_{nn}=\xv_n^T\Av^{-1}\xv_n=\xv_n^T\Av_{(-j)}^{-1}\xv_n-\frac{\left(\xv_n^T\Av_{(-j)}^{-1}\xv_j\right)^2}{1+\xv_j^T\Av_{(-j)}^{-1}\xv_j}.\no
\end{eqnarray}

As the result, if $j=n$, we have
\begin{eqnarray}
\frac{\partial h_{nn}}{\partial \xv_{n,k}}=\frac{2\Av_{(-n)}^{-1}(k,:)\xv_n}{\left(1+\xv_n^T\Av_{(-n)}^{-1}\xv_n\right)^2}.\no
\end{eqnarray}
If $j\neq n$, we have
\begin{eqnarray}
\frac{\partial h_{nn}}{\partial \xv_{j,k}}=-\frac{2\Av_{(-n)}^{-1}(k,:)\xv_n\left(1+\xv_j^T\Av_{(-j)}^{-1}\xv_j\right)-2\left(\xv_n^T\Av_{(-j)}^{-1}\xv_j\right)^2\Av_{(-n)}^{-1}(k,:)\xv_j}{\left(1+\xv_j^T\Av_{(-j)}^{-1}\xv_j\right)^2}.\no
\end{eqnarray}

Plugging~\eqref{eq:wpartial} and~\eqref{eq:vpartial} into the corresponding equations of OLS, Huber's proposal, Mallow's proposal and Schewppe's proposal discussed in Section~\ref{sec:linearregression}, we obtain the corresponding AIF.  

\begin{figure}[h]
	\begin{center}
		\includegraphics[width=12cm]{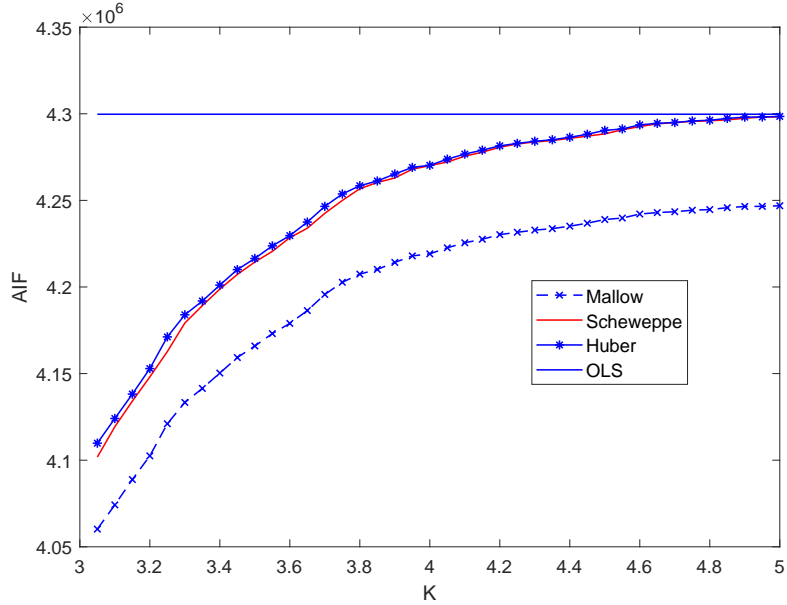}
		\caption{Adversarial robustness of Scheweppe and Mallow types of robust linear regression}
		\label{fig:schvsmallow}
	\end{center}
\end{figure}

Figure~\ref{fig:schvsmallow} illustrates the comparison of AIF for these methods for different value of $K$, the parameter in the Huber function~\eqref{eq:Huber}. In generating this figure, we set $q=5$, $N=500$. We first generate $\thetav$ using Gaussian zero mean and variance 1. After $\thetav$ is generated, it is fixed throughout the simulation. We let $K$ to be from 3.05 to 5. For each position, we run 100 times and obtain the average. We generate each entry of $\Xv$ using i.i.d with zero mean and variance 1. We then obtain $y_n$ by adding zero mean variance 2 noise to $\thetav^T\xv_n$. From the figure, we can see that for the same value of $K$, Mallow's proposal has the smallest value of AIF (i.e., it is the most robust again adversary modifications), the AIF value of Scheweppe's proposal is similar to Huber's proposal. Furthermore, all three approaches are more adversary modification resistant than OLS, which does not depend on $K$. Furthermore, as $K$ increases, the performance of all three methods approach that of OLS. This is expected, as $K$ increases, all three approaches are similar to OLS.

\section{Conclusion}\label{sec:con}

In this paper, we have investigated the adversarial robustness of multivariate $M$-Estimators. We have characterized the adversary's optimal modification strategy and its corresponding AIF. Under certain regularity conditions, we have characterized the optimal $M$-estimator for the case of joint location-scale estimation problem. We have further identified a tradeoff between robustness against adversarial modification and robustness against outliers, and have derived the optimal $M$-estimator that achieves the best tradeoff.

\appendix

\section{Proof of $f^*=g^*$}\label{app:fg}
First, we show $f^*\geq g^*$.

To simplify the presentation, we let $\bv_i$, $i=1,\cdots,q$ be the $i$-th row of $\tv_N^{'}(\Xv)$. For any $\sigmav\in\Sigmav$, we have
\begin{eqnarray}
||\tv_N^{'}(\Xv)\text{Vec}(\Delta\Xv)||_1=\sum\limits_{i=1}^q|\bv_i \text{Vec}(\Delta\Xv)|\nonumber\\
\overset{(a)}{=}\sum\limits_{i=1}^q|\sigmav(i)\bv_i \text{Vec}(\Delta\Xv)|\nonumber\\
\geq \sum\limits_{i=1}^q \sigmav(i)\bv_i \text{Vec}(\Delta\Xv),
\end{eqnarray}	
in which $(a)$ is true as each entry of $\sigmav$ is either 1 or $-1$.
As this holds for any $\sigmav\in \Sigmav$, hence $f^*\geq g^*$.

Next we show $f^*\leq g^*$.

$||\tv_N^{'}(\Xv)\text{Vec}(\Delta\Xv)||_1=\sum\limits_{i=1}^q|\bv_i \text{Vec}(\Delta\Xv)|=\sum\limits_{i=1}^q \text{sign}(\bv_i \text{Vec}(\Delta\Xv))\bv_i \text{Vec}(\Delta\Xv)$. Hence

\begin{eqnarray}
f^*&=&\max\limits_{\text{Vec}(||\Delta \Xv)||_p^p\leq Nm\delta^p}||\tv_N^{'}(\Xv)\text{Vec}(\Delta\Xv)||_1\no\\
&=&
\max\limits_{\sigmav\in\Sigmav}\hspace{5mm}\max_{\text{sign}(\bv_i \text{Vec}(\Delta\Xv))=\sigmav(i),||\Delta \Xv)||_p^p\leq Nm\delta^p}\sum\limits_{i=1}^q \sigmav(i)\bv_i \text{Vec}(\Delta\Xv)\no\\
&\leq& \max\limits_{\sigmav\in\Sigmav}\max_{||\Delta \Xv)||_p^p\leq Nm\delta^p}\sum\limits_{i=1}^q \sigmav(i)\bv_i \text{Vec}(\Delta\Xv)=g^*.\no
\end{eqnarray}

\section{Proof of Proposition~\ref{prop:aif}}\label{app:aif}
From~\eqref{eq:AIFscale}, we have the following variational optimization problem
\begin{eqnarray}
\min && \frac{\int_{0}^{\infty}\psi_2^{'}(z)^2f_0(z) \di z}{(\int_{0}^{\infty}z\psi_2^{'}(z)f_0(z)\di z)^2},\label{eq:cost}	\\
\text{s.t.}&& \int_{0}^{\infty}\psi_2(z)f_0(z)\di z=\psi_2(0)+\int_{0}^{\infty}\left(\int_{0}^{z}\psi_2^{'}(t)\di t\right)f_0(z)\di z=0,\label{eq:constraint}\\
&& \psi_{2}^{'}\geq 0.\no
\end{eqnarray}

As $\psi_2(0)$ does not appear in the objective function, we can solve~\eqref{eq:cost} without the constraint~\eqref{eq:constraint} first. After that, we can simply set $$\psi_2(0)=-\int_{0}^{\infty}\left(\int_{0}^{z}\psi_2^{'}(t)\di t\right)f_0(z)\di z$$ so that the constraint~\eqref{eq:constraint} will be satisfied. 

In the following, to simplify the notation, we will use $g(z)$ to denote $\psi_2^{'}(z)$. It is clear that the optimization problem is scale invariant in the sense that if $g^*(z)$ is a solution to this problem, then for any positive constant $c$,  $cg^*(z)$ is also a solution to this problem. As a result, without loss of generality, we can assume $\int_{0}^{\infty}zg(z)f_0(z)\text{d}z=1$. Using this, we can further simplify the optimization problem to 
\begin{eqnarray}
\min &&\frac{1}{2}\int_{0}^{\infty}g^2(z)f_0(z)\text{d}z,\nonumber\\
\text{s.t.}&& \int_{0}^{\infty}zg(z)f_0(z)\text{d}z=1,\nonumber\\
&& g(z)\geq 0.\no
\end{eqnarray}
For this convex calculus of variations problem, we form Lagrange function
\begin{eqnarray}
\mathcal{L}=\int_{0}^{\infty}\frac{1}{2}g^2(z)f_0(z)\text{d}z+\nu\left(-\int_{0}^{\infty}zg(z)f_0(z)\text{d}z+1\right)-\int_{0}^{\infty}\lambda(z)g(z)\di z.\nonumber
\end{eqnarray}

For any given $z$, the optimal value of $g(z)$ can be found from
\begin{eqnarray}
g^*(z)f_0(z)-\nu^* zf_0(z)-\lambda^*(z)=0,\label{eq:gfscale}
\end{eqnarray}
in which the parameters $\nu^*$ and $\lambda^*(z)\geq 0$ satisfy~\citep{Kot:Book:14, Gregory:Book:92}
\begin{eqnarray}
&&\int_{0}^{\infty}g^*(z)f_0(z)\text{d}z=1,\nonumber\\
&& \lambda^*(z)g(z)= 0.\label{eq:posiscale}
\end{eqnarray}

From~\eqref{eq:gfscale}, for $z\geq 0$ in the range of $f_0(z)$, we have
\begin{eqnarray}
g^*(z)=\frac{\lambda^*(z)+\nu^* zf_0(z)}{f_0(z)}.\nonumber
\end{eqnarray}

As $z\geq 0$ and $f_0\geq 0$, we know from~\eqref{eq:posiscale} that $\lambda^*(z)=0$, and hence
\begin{eqnarray}
g^*(z)=\nu^* z.\nonumber
\end{eqnarray}
and the optimal value of $\nu^*$ is 
\begin{eqnarray}
\nu^*=\frac{1}{\int_{0}^{\infty}z^2f_0(z)\text{d}z}.\nonumber
\end{eqnarray}
As the result, for $z$ in the range of $f_0(z)$, the optimal $g^*(z)$ is 
\begin{eqnarray}
g^*(z)=\frac{z}{\int_{0}^{\infty}z^2f_0(z)\text{d}z},\nonumber
\end{eqnarray}
and 
$\psi_2(0)=-\int_{0}^{\infty}\left(\int_{0}^{z}g^{*}(t)\di t\right)f_0(z)\di z$.

\section{Proof of Theorem~\ref{prop:aifcon}}\label{app:aifcon}

We first focus on P1, and rewrite P1 into the following form
\begin{eqnarray}
\text{P1a}:\hspace{6mm}\min && \frac{\mathbb{E}\{\psi_1^{'}(Z)^2\}}{(\mathbb{E}\{\psi_1^{'}(Z)\})^2}=\frac{\int_{0}^{\infty}\psi_1^{'}(z)^2f_0(z)\di z}{2(\int_{0}^{\infty}\psi_1^{'}(z)f_0(z)\di z)^2}, \no\\
\text{s.t.} 
&& \psi_1^{'}(z)\geq 0,\no\\
&& \frac{\psi_{1}^2(\infty)}{\left(\mathbb{E}\{\psi_1^{'}(Z)\}\right)^2}=\frac{\left(\int_{0}^{\infty}\psi_1^{'}(z)\di x\right)^2}{4(\int_{0}^{\infty}\psi_1^{'}(z)f_0(z)\di z)^2}\leq \xi_1^2.\no
\end{eqnarray}
Here, we use the constraint that $\psi_1(z)$ is an odd function.

To ease the notation, in the following, we use $h(z)$ to denote $\psi_1^{'}(z)$. It is clear that the optimization problem is scale invariant, hence we can assume $\int_{0}^{\infty}h(z)f_0(z)\di z=1$. Hence, P1a can be simplified to
\begin{eqnarray}
\min && \frac{1}{2}\int_{0}^{\infty}h^2(z)f_0(z)\di z, \no\\
\text{s.t.} && h(z)\geq 0,\no\\
&& \int_{0}^{\infty}h(z)\di z\leq 2\xi_1,\no\\
&& \int_{0}^{\infty}h(z)f_0(z)\di z=1.\no
\end{eqnarray}

To solve this convex functional minimization problem, we first form the Lagrangian function
\begin{eqnarray}
\mathcal{L}&=&\frac{1}{2}\int_{0}^{\infty}h^2(z)f_0(z)\text{d}z+\nu\left(-\int_{0}^{\infty}h(z)f_0(z)\text{d}z+1\right)-\int_{0}^{\infty}\lambda(z)h(z)\text{d}z\nonumber\\
&&\hspace{-10mm}+\vartheta_1\left(\int_{0}^{\infty}h(z)\text{d}z- 2\xi_1\right).\nonumber
\end{eqnarray}

For any given $z$, the optimal value of $h(z)$ can be found from
\begin{eqnarray}
h^*(z)f_0(z)-\nu^* f_0(z)+\vartheta_1^*-\lambda^*(z)=0,\label{eq:gf}
\end{eqnarray}
in which the parameters $\nu^*$, $\vartheta_1^*\geq 0$ and $\lambda^*(z)\geq 0$ satisfy~\citep{Kot:Book:14, Gregory:Book:92}
\begin{eqnarray}
&&\int_{0}^{\infty}h^*(z)f_0(z)\text{d}z=1,\nonumber\\
&&\vartheta^*_1\left( \int_{0}^{\infty}h^*(z)\text{d}z
-2 \xi_1 \right )=0,\nonumber\\
&&\lambda^*(z)h(z)= 0.\label{eq:posi}
\end{eqnarray}

From~\eqref{eq:gf}, for $z\geq 0$ in the range of $f_0(z)$, we have
\begin{eqnarray}
h^*(z)=\frac{\lambda^*(z)+\nu^* f_0(z)-\vartheta^*_1}{f_0(z)}.\nonumber
\end{eqnarray}

Combining this with condition~\eqref{eq:posi}, we know that if $\nu^* f_0(z)-\vartheta^*_1>0$, then $\lambda^*(z)=0$. On the other hand, if $\nu^* f_0(z)-\vartheta^*_1<0$, then $h^*(z)=0$. As a result, we have
\begin{eqnarray}
h^*(z)=\left\{\begin{array}{cc}
\nu^*-\frac{ \vartheta_1^*}{ f_0(z)}, & \nu^* f_0(z)> \vartheta_1^* ;\\
0,& \text{otherwise}.
\end{array}\right.\nonumber
\end{eqnarray}
This characterizes the optimal odd function $\psi_1(z)$.

We now focus on $\psi_{2}$. We rewrite P2 into
\begin{eqnarray}
\text{P2a:}\hspace{6mm}\min && \frac{\mathbb{E}\{\psi_2^{'}(Z)^2\}}{(\mathbb{E}\{Z\psi_2^{'}(Z)\})^2}=\frac{\int_{0}^{\infty}\psi_2^{'}(z)^2f_0(z)\di z}{2(\int_{0}^{\infty}z\psi_2^{'}(z)f_0(z)\di z)^2},\no \\
\text{s.t.}&& \mathbb{E}\{\psi_2\}=2\psi_2(0)+2\int_{0}^{\infty}\left[\int_{0}^{z}\psi_2^{'}(t)\di t\right]f_0(z)\di z=0,\label{eq:Fisherno}\\
&& \psi_2^{'}(z)\geq 0,\no\\
&& \frac{\psi_{2}^2(\infty)}{\left(\mathbb{E}\{Z\psi_2^{'}(Z)\}\right)^2}=\frac{\left(\psi_{2}(0)+\int_{0}^{\infty}\psi_2^{'}(z)\di z\right)^2}{4\left(\int_{0}^{\infty}z\psi_2^{'}(z)f_0(z)\di z\right)^2}\leq \xi_2^2,\no\\
&&\frac{\psi_{2}^2(0)}{\left(\mathbb{E}\{Z\psi_2^{'}(Z)\}\right)^2}=\frac{\psi_{2}^2(0)}{4\left(\int_{0}^{\infty}z\psi_2^{'}(z)f_0(z)\di z\right)^2}\leq \xi^2.\no
\end{eqnarray}

We note that in P2a,
\begin{eqnarray}
\int_{0}^{\infty}\left[\int_{0}^{z}\psi_2^{'}(t)\di t\right]f_0(z)\di z=\int_{0}^{\infty}\psi_2^{'}(z)(1-F_0(z))\di z.\no
\end{eqnarray}
To satisfy the Fisher consistent constraint~\eqref{eq:Fisherno} in P2a, we need to set $$\psi_2(0)=-\int_{0}^{\infty}\psi_2^{'}(z)(1-F_0(z))\di z.$$
Using these, P2a can be simplified to
\begin{eqnarray}
\text{P2b:}\hspace{6mm}\min && \frac{\mathbb{E}\{\psi_2^{'}(Z)^2\}}{(\mathbb{E}\{Z\psi_2^{'}(Z)\})^2}=\frac{\int_{0}^{\infty}\psi_2^{'}(z)^2f_0(z)\di z}{2(\int_{0}^{\infty}z\psi_2^{'}(z)f_0(z)\di z)^2}, \no\\
\text{s.t.}&& \psi_2^{'}(z)\geq 0,\no\\ &&\frac{\left(-\int_{0}^{\infty}\psi_2^{'}(z)(1-F_0(z))\di z+\int_{0}^{\infty}\psi_2^{'}(z)\di z\right)^2}{4\left(\int_{0}^{\infty}z\psi_2^{'}(z)f_0(z)\di z\right)^2}\leq \xi_2^2,\no\\
&&\frac{(-\int_{0}^{\infty}\psi_2^{'}(z)(1-F_0(z))\di z)^2}{4\left(\int_{0}^{\infty}z\psi_2^{'}(z)f_0(z)\di z\right)^2}\leq \xi^2.\no
\end{eqnarray}

Similar to other cases, it is clear that P2b is scale invariant, and hence we can without loss of generality assume that $\int_{0}^{\infty}z\psi_2^{'}(z)f_0(z)dz=1$. Using this fact and denoting $g(z)=\psi_2^{'}(z)$, P2b can be simplified to
\begin{eqnarray}
\text{P2b:}\hspace{6mm}\min && \frac{1}{2}\int_{0}^{\infty}g(z)^2f_0(z)\di z \no \\
\text{s.t.}&& \int_{0}^{\infty}g(z)F_0(z)\di z\leq 2\xi_2,\no\\
&&\int_{0}^{\infty}g(z)(1-F_0(z))\di z\leq 2\xi,\no\\
&& \int_{0}^{\infty}zg(z)f_0(z)\di z=1,\no\\
&& g(z)\geq 0.\no
\end{eqnarray}
To solve this convex functional minimization problem, we first form the Lagrangian function
\begin{eqnarray}
\mathcal{L}&=&\hspace{-3mm}\frac{1}{2}\int_{0}^{\infty}g^2(z)f_0(z)\text{d}z+\nu\left(-\int_{0}^{\infty}zg(z)f_0(z)\text{d}z+1\right)-\int_{0}^{\infty}\lambda(z)g(z)\text{d}z\nonumber\\
&&\hspace{-3mm}+\vartheta_1\left(\int_{0}^{\infty}g(z)F_0(z)\text{d}z
- 2\xi_2 \right)+\vartheta_2\left(\int_{0}^{\infty}g(z)\left[1-F_0(z)\right]\text{d}z- 2\xi\right).\nonumber
\end{eqnarray}

For any given $z$, the optimal value of $g(z)$ can be found from
\begin{eqnarray}
g^*(z)f_0(z)-\nu^* z f_0(z)+\vartheta^*_2+(\vartheta^*_1-\vartheta^*_2)F_0(z)-\lambda^*(z)=0,\label{eq:gfs}
\end{eqnarray}
in which the parameters $\nu^*$, $\vartheta^*_1\geq 0$, $\vartheta^*_2\geq 0$, $\lambda^*(z)\geq 0$ satisfy
\begin{eqnarray}
&&\int_{0}^{\infty}xg^*(z)f_0(z)\text{d}z=1,\nonumber\\
&&\vartheta^*_1\left( \int_{0}^{\infty}g^*(z)F_0(z)\text{d}z
- 2\xi_2 \right )=0,\nonumber\\
&& \vartheta^*_2\left(\int_{0}^{\infty}g^*(z)\left[1-F_0(z)\right]\text{d}z- 2\xi\right)=0,\nonumber\\
&& \lambda^*(z)g^*(z)= 0.\label{eq:posis}
\end{eqnarray}

From~\eqref{eq:gfs}, for those $z\geq 0$ with $f_0(z)>0$, we have
\begin{eqnarray}
g^*(z)=\frac{\lambda^*(z)+\nu^* z f_0(z)-\vartheta^*_2-(\vartheta^*_1-\vartheta^*_2)F_0(z)}{f_0(z)}.\nonumber
\end{eqnarray}

Combining this with the condition~\eqref{eq:posis}, we know that if $\nu^* z f_0(z)-\vartheta^*_2-(\vartheta^*_1-\vartheta^*_2)F_0(z)>0$, then $\lambda^*(z)=0$. On the other hand, if $\nu^* f_0(z)-\vartheta^*_2-(\vartheta^*_1-\vartheta^*_2)F_0(z)<0$, then $g^*(z)=0$. As the result, we have

\begin{eqnarray}
g^*(z)=\left\{\begin{array}{cc}
\nu^* z-\frac{ \vartheta_2^*+(\vartheta_1^*-\vartheta_2^*)F_0(z)}{ f_0(z)}, & \nu^* z f_0(z)> \vartheta_1^* F_0(z)+\vartheta_2^*(1-F_0(z));\nonumber\\
0,& \text{otherwise}.
\end{array}\right.
\end{eqnarray}
Coupled with $\psi_2(0)=-\int_{0}^{\infty}g^{*}(z)(1-F_0(z))\di z$, this characterizes the optimal even $\psi_2(z)$.

\section*{Acknowledgements}
The work of E. Bayraktar was supported in part by the National Science Foundation under grant DMS-1613170 and by the Susan M. Smith Professorship. The work of L. Lai was supported by the National Science Foundation under grants CCF-17-17943 and ECCS-17-11468.


\end{document}